
%

\documentclass[11pt,a4paper]{article}
\usepackage{times,latexsym}
\usepackage{url}
\usepackage[T1]{fontenc}

%

\usepackage[acceptedWithA]{tacl2018v2}
\setlength\titlebox{22em}
\usepackage{amsfonts}       
\usepackage{amsmath}
\usepackage{amsthm}
\usepackage{nicefrac}       
\usepackage{xfrac}
\usepackage{subcaption}
\usepackage{multirow}
\usepackage{ctable} 
\usepackage{booktabs}       
\usepackage{diagbox}
\usepackage{csquotes}

\newtheorem{theorem}{Theorem}[section]
\newtheorem{prop}{Proposition}[section]

\newtheorem{defn}{Definition}[section]

\newtheorem{lemma}{Lemma}[section]

\newtheoremstyle{claim}
  {\topsep}
  {\topsep}
  {\itshape}
  {}
  {\itshape}
  {.}
  {.5em}
  {\thmname{#1}\thmnumber{ #2}\thmnote{ (#3)}}
\theoremstyle{claim}
\newtheorem*{claim}{Claim}

\DeclareMathOperator*{\argmin}{arg\,min}

\title{Modeling Non-Cooperative Dialogue: Theoretical and Empirical Insights}


\author{
Anthony Sicilia \\
Intelligent Systems Program, \\
University of Pittsburgh, \\
Pittsburgh, USA \\
{\tt anthonysicilia@pitt.edu} \\

\And

Tristan Maidment \\
Intelligent Systems Program, \\
University of Pittsburgh, \\
Pittsburgh, USA \\
{\tt tristan.maidment@pitt.edu} \\

\AND

Pat Healy \\
Department of Informatics \\
and Networked Systems, \\
University of Pittsburgh, \\
Pittsburgh, USA \\
{\tt pat.healy@pitt.edu} \\

\And

Malihe Alikhani \\
Department of Computer Science and \\
Intelligent Systems Program, \\
University of Pittsburgh, \\
Pittsburgh, USA \\
{\tt malihe@pitt.edu} \\
}



\date{}

\begin{document}
\maketitle

\begin{abstract}
Investigating cooperativity of interlocutors is central in studying pragmatics of dialogue. Models of conversation that only assume cooperative agents fail to explain the dynamics of strategic conversations. 
Thus, we investigate the ability of agents to identify non-cooperative interlocutors while completing a concurrent visual-dialogue task. Within this novel setting, we study the optimality of communication strategies for achieving this multi-task objective. We use the tools of learning theory to develop a theoretical model for identifying non-cooperative interlocutors and apply this theory to analyze different communication strategies. 
We also introduce a corpus of non-cooperative conversations about images in the \textit{GuessWhat?!} dataset proposed by \citet{de2017guesswhat}. We use reinforcement learning to implement multiple communication strategies in this context and find empirical results validate our theory.
\footnote{Per-print version to appear in \textit{Transactions of the Association for Computational Linguistics} published by MIT Press.}
\end{abstract}

\section{Introduction}
\label{sec:intro}
A robust dialogue agent cannot always assume a cooperative conversational counterpart when deployed \textit{in the wild}. Even in goal-oriented settings, where the intent of an interlocutor may seem to be granted, bad actors and disinterested parties are free to interact with our dialogue systems. These non-cooperative interlocutors add harmful noise to data, which can elicit unexpected behaviors from our dialogue systems. Thus, the need to study non-cooperation increases daily as we build and deploy conversational systems which interact with people from different demographics, political views, and intents, continuously learning from the collected data. Examples include Amazon Alexa, task-oriented systems that help patients recovering from injuries or can teach a person a new language, and systems that help predict deceptive behaviors in courtrooms. To effectively communicate in presence of unwanted behaviors like bullying~\cite{cercas-curry-rieser-2018-metoo}, systems need to understand users' strategic behaviors \cite{asher2013strategic} and be able to identify non-cooperative actions.
%
%
Designing agents that learn to identify non-cooperative interlocutors is challenging since it requires processing the context of the dialogue in addition to modeling the choices that interlocutors make under uncertainty -- choices which typically affect their ability to complete tasks unrelated to identifying non-cooperation as well.
In light of this, we ask: 
\begin{displayquote}
\textit{What communication strategies are effective for identifying non-cooperative interlocutors, while also achieving the goals of a distinct dialogue task?}
\end{displayquote}

To answer this question, we appeal to a simple non-cooperative version of the visual dialogue game \textit{Guess What?!} \cite{de2017guesswhat}. See Figure~\ref{fig:intro-example} for an example. The game consists of a multi-round dialogue between two players: a \textit{question}-player and an \textit{answer}-player. Both have access to the same image whereas only the answer-player has access to an image-secret; i.e., a particular goal-object for the question-player to recognize. The question-player's goal is to ask the answer-player questions which will reveal the secret. A \textit{cooperative} answer-player then provides good answers to assist in this goal. In the original game, the answer-player is always cooperative. Our modified game instead allows the answer-player to be \textit{non-cooperative} with some non-zero probability. 
Unlike a cooperative answer-player, a non-cooperative answer-player will not necessarily act in assistance to the question-player, and instead, may attempt to reveal an incorrect secret or otherwise hinder information exchange. In experiments, the specific strategies we study are learned from human non-cooperative conversation. 
The question-player, importantly, does not know if answer-player is non-cooperative. At the end of the dialogue, the question-player's final objective is not only to identify the goal-object, but also to determine if the conversation takes place with a cooperative or non-cooperative answer-player.
\begin{figure}
    \centering
    \includegraphics[width=.99\columnwidth]{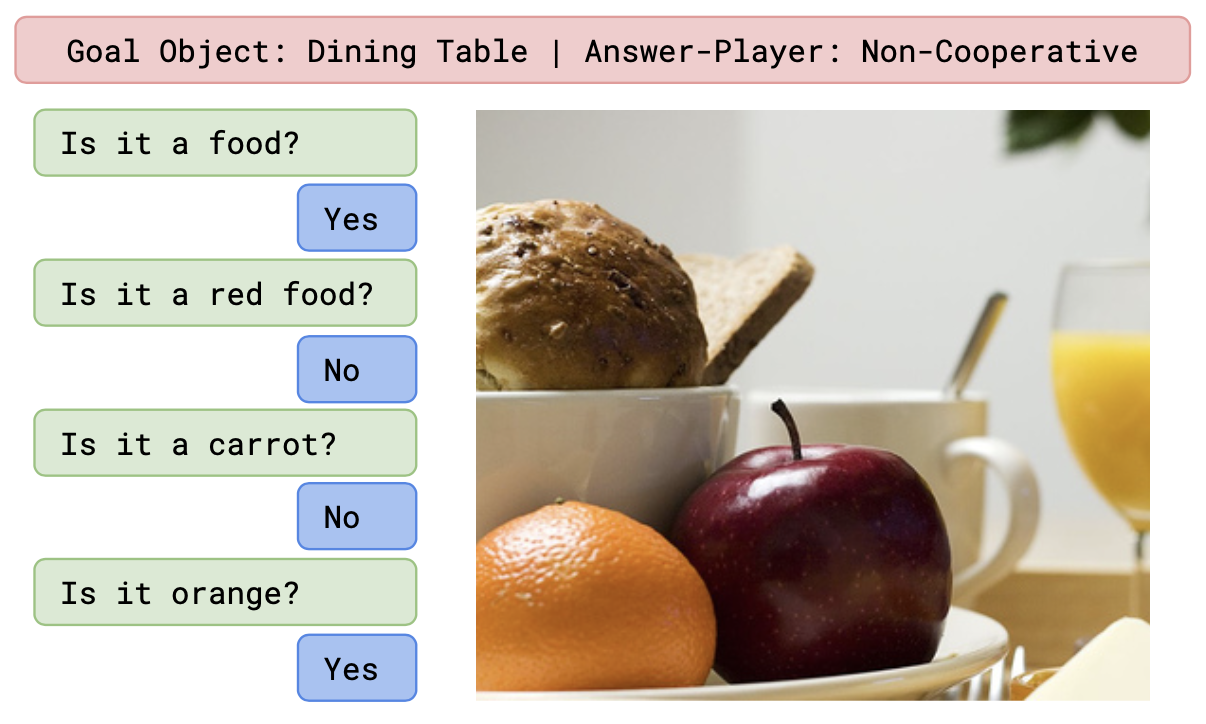}
    \caption {\small{(Example) The question-player's objective is to identify a secret goal-object (the dining table). The answer-player, who may be cooperative or non-cooperative, gives binary responses to the question-player's queries. In this example, the answer-player is non-cooperative and leads the question-player to an incorrect object (the orange). This is a real example produced by autonomous agents (described in Section~\ref{sec:results}).}}
    \label{fig:intro-example}
\end{figure}

We propose a formal theoretical model for analyzing communication strategies in the described scenario. We frame the question-player's objective in terms of two distinct classification tasks and use tools from the theory of learning algorithms to analyze relationships between these tasks. Our main theoretical result identifies circumstances where the question-player's performance in identifying non-cooperation  correlates with performance in identifying the goal-object. Building on this, we provide a mathematical definition of the \textit{efficacy} of a non-cooperative player which is based on the conceptual idea that cooperation is necessary to make progress in dialogue. Our analysis concludes that when the answer-player is \textit{effective} in this sense, the question-player can gather useful information for both the object identification task and the non-cooperation identification task by selecting a communication strategy based \textit{only} on the former objective. 

To test the assumptions of our theoretical model as well as the value of the aforementioned communication strategy in practice, we implement this strategy using reinforcement learning (RL). Our experiments validate our theory. As compared to heuristically justified baselines, the communication strategy motivated by our theory yields consistently better results. To conduct this experiment, we have collected a novel corpus of non-cooperative \textit{Guess What?!} game instances which is \href{https://github.com/anthonysicilia/modeling-non-cooperation-TACL2022}{publicly available}.\footnote{\url{https://github.com/anthonysicilia/modeling-non-cooperation-TACL2022}} Throughout experimentation, we provide a qualitative and quantitative analysis of the non-cooperative strategies present in our corpus. These results, in particular, demonstrate that non-cooperative autonomous agents that utilize dialogue history can better deceive question-players. This contrasts the observation of \citet{strub2017end} that cooperative answer-players do not use this information.

In total, our work is positioned at the intersection of two foci: \textit{detection} of non-cooperative dialogue and \textit{modeling} of non-cooperative dialogue. Unlike many \textit{detection} works, we consider detection in context of interaction. Additionally, while many \textit{modeling} works consider the intent of conversational agents and construct strategies for non-cooperative dialogue based on this, our strategies are motivated purely from a learning theoretic argument. As we are aware, a theoretical description similar to ours has not been given before.


\section{Related Works}
\label{sec:related}
The view that conversation is not necessarily cooperative is not novel, but the argument can be made that it has lacked sufficient investigation in the dialogue literature \cite{lee2000ethics}. Game theoretic investigations of non-cooperation are plentiful, perhaps beginning with work of \citet{nash1951non}. Concepts from this space, such as the stochastic games introduced by \citet{shapley1953stochastic}, have been used to model dialogue \cite{barlier2015human} when non-cooperation between parties is allowed. \citet{pinker2008logic} also consider a game-theoretic model of speech. In fact, even the dialogue game we consider in this text can be modelled through game-theoretic constructs; e.g., a Bayesian Game \citep{kajii1997robustness}. Whereas game theory focuses primarily on analysis of strategies, studying non-cooperation in dialogue requires both the learning of strategies and the learning of utterance meaning. Aptly, our use of the theory of learning algorithms (rather than game theory) is suited to handle both of these. While we are first to use learning theory, efforts to characterize non-cooperation in dialogue, learn non-cooperative strategies in autonomous agents, and detect non-cooperation in dialogue are not absent from the literature \cite{pluss2010non,georgila2011learning,shim2013taxonomy, vourliotakis2014detecting}. 
We discuss these topics in detail in the following.
\paragraph{Modeling Non-Cooperative Dialogue.}
One of the earliest works on non-cooperation -- specific to dialogue -- is that of \citet{jameson1994cooperating} which considers strategic conversation for advantage in commerce. Similarly, \citet{traum2008virtual} focuses on negotiation and \citet{georgila2011reinforcement} focus on learning negotiation strategies (i.e., argumentation) through reinforcement learning (RL). More recently, \citet{efstathiou2014learning} consider using RL to teach agents to compete in a resource-trading game and \citet{keizer2017evaluating} use \textit{deep} RL to model negotiation in a similiar game. In most of these, the intent of interlocutors is assumed and utilized in model design. 
In the last, strategies are learned from data similarly to our work, but objectives for learning are not motivated by learning-theoretic analysis as in ours.
\paragraph{Detecting Non-Cooperative Dialogue.}
The work of \citet{zhou2004automating} presents an early example of automated deception detection which focuses on indicators arising from the used language. \citet{pluss2014computational} also focuses on how (more general) non-cooperative dialogue can be identified at a linguistic level. 
Besides linguistic cues, several works employ additional features in identification of deception. These include physiological responses \cite{abouelenien2014deception}, human micro-expressions \cite{wu2017deception}, and acoustics \cite{levitan2019deception}. There are also many novel scenarios for detection of deception including talk-show games \cite{soldner2019box}, interrogation games \cite{chouyour}, and news \cite{conroy2015automatic, shu2017fake}.
\paragraph{Other Visual Dialogue Games.}
As \citet{galati2020retained} observe, conversation involving multiple media for information transfer (instead of a single medium) typically leads to increased understanding between interlocutors. Thus, visual-dialogue is a particularly interesting setting for investigating both cooperation and non-cooperation. Appropriately, cooperative visual-dialogue games \cite{das2017visual, schlangen2019grounded,haber-etal-2019-photobook} are a growing area of study. We extend, in particular, the cooperative game \textit{Guess What?!} proposed by \citet{de2017guesswhat} to explicitly allow for non-cooperation. Whereas visual-dialogue research often focuses on mechanisms to improve task success, our work is more broadly interested in an analysis of human communication strategies within a non-cooperative, multi-task setting.
\paragraph{Related Learning Theoretic Work.}
Classification of non-cooperative examples is similar to detection of adversarial examples; see \citet{serban2018adversarial} for a survey. Still, most learning-theoretic work only discusses models which are robust to adversaries; e.g., see \citet{feige2015learning, cullina2018pac, attias2019improved, bubeck2019adversarial, diochnos2019lower}; and \citet{montasser2020reducing} to name a few. In contrast, we focus on detection. Additionally, our theoretical results are more broad and do not explicitly model adversarial intent. Identifying non-cooperation in dialogue is also related to detecting distribution shift in high-dimensional, distribution-independent settings \cite{gretton2012kernel, lipton2018detecting, rabanser2018failing, atwell-etal-2022-change} as well as learning to generalize in presence of such distribution shift \cite{ben2010theory, ganin2015unsupervised, zhao2018adversarial, zhao2019learning, schoenauer2019multi, johansson2019support, germain2020pac, sicilia2022pac}. This connection is a strong motivation for our theoretical work, but we emphasize our results are \textit{not} a trivial application of existing theory. 
\section{Dataset}
\label{sec:data}
In this section, we first describe our modified version of the \textit{GuessWhat?!} game. Then, we describe the data acquisition process as well as the non-cooperative dataset used in this study. The dataset will be made publicaly available upon publication.

\subsection{Proposed Dialogue Game}
As noted, our proposed dialogue game is a modification of the cooperative two-player visual-dialogue game \textit{GuessWhat?!} \citep{de2017guesswhat}. Distinctly, our version incorporates non-cooperation.
\paragraph{Initialization.}
An image is randomly selected and an object within this image is randomly chosen to be the goal-object. With some probability, the game instance is designated as a \textit{cooperative} game. Otherwise, the game is \textit{non-cooperative}.
\paragraph{Players.}
Unlike the original \textit{GuessWhat?!} game, there are three (not two) player roles: the question-player, the \textit{cooperative} answer-player, and the \textit{non-cooperative} answer-player. For \textit{cooperative} game instances (decided at initialization), the cooperative answer-player is put in play. Otherwise, the non-cooperative answer-player is put in play. The question-player always plays and does not know whether the answer-player is cooperative or non-cooperative. To start,  all active players are granted access to the image. The question-player asks yes/no questions about the image and objects within the image. At the end of dialogue, the question-player will use the gathered information to guess both the \textit{unknown} goal-object and the (cooperation) type of the active answer-player.\footnote{This is done simultaneously, so knowledge of the correctness of one guess cannot inform the other guess.} Unlike the question-player, the active answer-player has knowledge of the game's goal-object and responds to the question-player's queries with \textit{yes}, \textit{no}, or \textit{n/a} (not applicable).
\paragraph{Objectives.}
The question-player's goals are always to identify both the goal-object and the presence of non-cooperation if it exists (i.e., if the non-cooperative answer-player is in play). The cooperative answer-player's goal is to reveal the goal-object to the question-player by answering the yes/no questions appropriately. The non-cooperative answer-player's goal is instead to lead the question-player away from this goal object; i.e., to ensure the question-player does not correctly guess this object. There is no specific way in which this \textit{misleading} must be done (e.g., there is not always an alternate object). Instead, during data collection, participants are simply instructed to \textit{deceive} the question-player.
\paragraph{Gameplay.}
The question-player and active answer-player converse until the question-player is ready to make a guess or a pre-specified maxium number of dialogue rounds have transpired.\footnote{For data collection, no question limit is set. Experiments in Section~\ref{sec:results} follow \citet{strub2017end} and set the max to 5.} The question-player is then presented with a list of possible objects and must guess which of these was the secret goal-object. In addition, the question-player must guess whether the answer-player was cooperative or non-cooperative.

\subsection{Data Collection}
\paragraph{Collection.} We developed a web application to collect dialogue from human participants taking the role of a non-cooperative answer-player. Participants were native English speakers recruited via an online crowd-sourcing platform and paid \$15 per hour according to our institution's human subject review board.
Participants were asked to deceive
an autonomous question-player pre-trained to identify the goal-object only. For pre-training, we used the original \textit{Guess What?!} game corpus and supervised learning setup \citep{de2017guesswhat, strub2017end}. Participants received an image and a crop that indicated the goal-object. Both of these are randomly sampled from the original \textit{Guess What?!} game corpus. They were tasked with leading the question-player away from this goal object by answering questions with \textit{yes}, \textit{no}, or \textit{n/a}. Dialogue persisted until the question-player made a guess. 
\paragraph{Dataset.}
We collected 3746 non-cooperative dialogues. Dataset statistics are shown in Table~\ref{tab:datastats}, while visualization of the object and dialogue-length distributions are shown in Figure~\ref{fig:datastats}. Compared to the original \textit{Guess What?!} corpus, both dialogue-length and object distributions are similar. For objects, this is expected as these are uniformly sampled from the original corpus. We see 16 of our 20 most likely objects are shared with the 20 most likely of the original \textit{GuessWhat?!} object distribution, and further, the first 4 objects have identical ordering (see Figure~\ref{fig:orig-datastats}). Differences here are simply attributed to randomness and the increasing uniformity as likelihood of an object decreases. For dialogue length, one might expect non-cooperative dialogue to be longer. Instead, the distributions are both right-skew with an average near 5 (i.e., 4.99 in our dataset and 5.11 in the original \textit{GuessWhat?!} corpus). The primary difference is that the original corpus has more outliers, which is most probably a result of the increased sample size. We likely observe consistency between our non-cooperative corpus and the original corpus because the question-player -- who controls dialogue length -- is autonomous and trained on a cooperative corpus. Hence, this and other aspects of our non-cooperative corpus may be influenced by pre-conditioning the question-player for cooperation. This issue is mitigated in our experiments (Section~\ref{sec:results}) where the question-player is also trained on simulated non-cooperative dialogue. Also note, while the size of our collected dataset is smaller than the original cooperative corpus, we only use our data to train an autonomous, non-cooperative answer-player. When a larger sample is required (e.g., when training the question-player via RL), we use simulated non-cooperative data generated by the pre-trained, non-cooperative answer-player, which is a standard technique in the literature \citep{strub2017end}.

Besides the statistics shown in Table~\ref{tab:datastats} and Figure~\ref{fig:datastats}, we also point out the question-player succeeded at identifying the goal-object in only 19\% of the collected games. Comparatively, on an autonomously generated and fully cooperative test set, comparably trained question-players achieve 52.3\% success \citep{strub2017end}. This indicates the deceptive strategies employed by the humans were effective at fooling the question-player to select the wrong goal-object. More detailed analysis of the strategies used by the participants is given in Section~\ref{sec:results}; these strategies are self-described by the participants and also automatically detected for a simple case. Finally, we also computed the answer distribution on the collected corpus: answers were 46\% \textit{yes}, 52\% \textit{no}, and 2\% \textit{n/a}.

\begin{table}
    \centering\footnotesize
    \begin{tabular}{cccccc}
         & \textbf{images} & \textbf{objects} &  \textbf{words (+3)} & \textbf{questions} \\ \toprule
         Ours & 2.7K & 2.8K & 2.3K (1K) & 8.1K \\
         GW & 67K & 134K & 19K (6.6K) & 277K
    \end{tabular}
    \caption{\small Count of unique images, objects, words, and questions within the non-cooperative games collected. (+3) gives count of words with at least 3 occurences. First row is our proposed dataset. Second (GW) reports computed stats on the original \textit{GuessWhat?!} corpus.}
    \label{tab:datastats}
\end{table}

\begin{figure*}
    \centering
    \includegraphics[width=\textwidth]{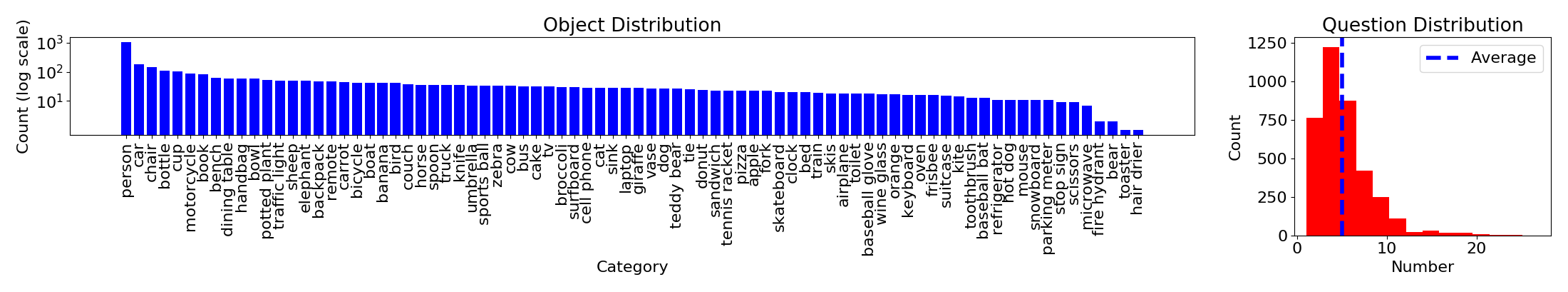}
    \caption{\small Our new non-cooperative dataset. \textbf{Left} shows distribution of objects in the collected games. All 80 objects in the original \textit{GuessWhat?!} corpus occur. \textbf{Right} shows distribution of question-counts per dialogue.}
    \label{fig:datastats}
\end{figure*}
\begin{figure*}
    \centering
    \includegraphics[width=\textwidth]{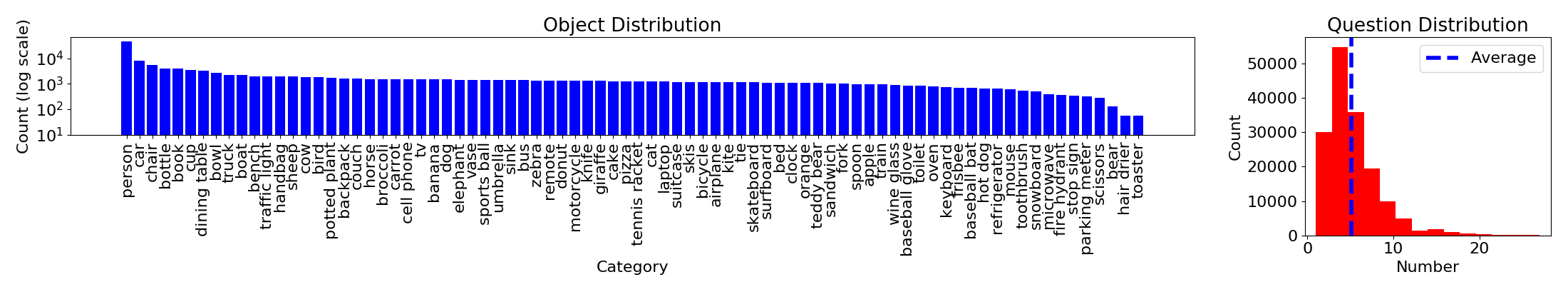}
    \caption{\small Original \textit{GuessWhat?!} dataset. \textbf{Left} shows distribution of objects in original games. \textbf{Right} shows distribution of question-counts per dialogue with 114 outliers larger than 27 removed for improved visualization.}
    \label{fig:orig-datastats}
\end{figure*}
\section{A Theoretical Model}
\label{sec:theory}
This section formally models the objectives of the question-player as two distinct learning tasks. We use results from the theory of learning algorithms to give a relationship between these tasks in Thm~\ref{thm:main}. We then use Thm~\ref{thm:main} to analyze communication strategies in Section~\ref{sec:comm_analysis}.
\subsection{Setup}
\label{sec:theory_setup}
As described in Section~\ref{sec:data}, the question-player has two primary objectives: identification of the goal-object and identification of non-cooperation. To do so, the question-player is granted access to the image and may also converse with an answer-player. In the end, the question-player guesses based on this evidence (i.e., the image features and dialogue history). Mathematically, we encapsulate the question player's guess as a \textit{learned hypothesis} (i.e., function) from the game features to the set of object labels or the set of cooperation labels. 
\paragraph{Key Terms.}
We write $\mathcal{Y}$ to describe the finite set of object labels and $\mathcal{Z} = \{\mathtt{CP}, \mathtt{NC}\}$ for the set of cooperation labels; $\mathtt{CP}$ denotes cooperation and $\mathtt{NC}$ denotes non-cooperation. In relation to the example in Figure~\ref{fig:intro-example}, $\mathcal{Y}$ might contain labels for the orange, apple, cups, and dining-table. In the same example, the cooperation label would be $\mathtt{NC}$ to indicate a non-cooperative answer-player. We use $\mathcal{X}$ to denote the feature space which contains all possible game configurations. For example, each $X \in \mathcal{X}$ might capture the dialogue history, the image, and particular features of the image pre-extracted for the question-player (i.e., which objects are contained in the image at which locations). With this notation, the question-player's learned hypotheses may be described as an \textit{object identification hypothesis} $ o: \mathcal{X} \to \mathcal{Y}$ and a \textit{cooperation identification hypothesis} $c: \mathcal{X} \to \mathcal{Z}$. The question-player learns these functions by example. In particular, we assume the question-player is given access to a random sequence of $m$ examples $S = (X_i, Y_i, Z_i)_{i=1}^m$ independently and identically distributed according to an unknown distribution $\mathbb{P}_\theta$ over $\mathcal{X} \times \mathcal{Y} \times \mathcal{Z}$. To abbreviate, we write $S \overset{\mathrm{iid}}{\sim} \mathbb{P}_\theta$ and assume all samples are of size $m$ for simplicity. The distribution $\mathbb{P}_\theta$ is dependent on the question-player's \textit{communication policy} $\pi_\theta$ which we assume is uniquely determined by the real-vector $\theta$. Later, this allows us to select communication strategies using common reinforcement learning algorithms.

We emphasize the dependence of $\mathbb{P}_\theta$ on $\pi_\theta$ distinguishes our setup from typical scenarios in learning theory. Besides learning the hypotheses $o$ and $c$, the question-player
can also select the communication policy $\pi_\theta$. This policy implicitly dictates the distribution over which the question-player learns, and thus, can either improve or hurt the player's chance at success. As in reality, neither we nor the learner have  knowledge of the mechanism through which changes to the communication policy $\pi_\theta$ modify the distribution $\mathbb{P}_\theta$. Our only assumption is that changing $\pi_\theta$ does not modify the probability of cooperation. That is, there is a constant $\mathbf{p}_\mathtt{NC} \in (0,1)$ such that for all $\pi_\theta$
\begin{equation}\label{eqn:only_pi_assump}\footnotesize
\mathbf{Pr}(Z = \mathtt{NC}) = \mathbf{p}_\mathtt{NC}; \quad (X,Y,Z) \sim \mathbb{P}_\theta.
\end{equation}
This agrees with the description in Section~\ref{sec:data} where the game instance is designated cooperative or non-cooperative prior to dialogue. With a random sample $S$, an unbiased estimate for $\mathbf{p}_\mathtt{NC}$ is
\begin{equation}\footnotesize
\widehat{\mathbf{p}}_S \overset{\mathrm{def}}{=} \tfrac{1}{m}\sum\nolimits_i \mathbf{1}[Z_i = \mathtt{NC}]
\end{equation}
where $\mathbf{1}$ is the indicator function.
\paragraph{Error.}
To measure the quality of the question-player's guesses, we report the observed error-rate on the sample $S = (X_i, Y_i, Z_i)_{i=1}^m$. In particular, the empirical \underline{\textbf{o}}bject-identification \underline{\textbf{er}}ror for any hypothesis $o : \mathcal{X} \to \mathcal{Y}$ is defined
\begin{equation}\footnotesize
    \widehat{\mathbf{oer}}_S(o) \overset{\mathrm{def}}{=} \tfrac{1}{m}\sum\nolimits_{i=1}^m \mathbf{1}[o(X_i) \neq Y_i].
\end{equation}
 Similarly, the \underline{\textbf{c}}ooperation identification \underline{\textbf{er}}ror for any hypothesis $c : \mathcal{X} \to \mathcal{Z}$ is defined
\begin{equation}\footnotesize
    \widehat{\mathbf{cer}}_S(c) \overset{\mathrm{def}}{=} \tfrac{1}{m}\sum\nolimits_{i=1}^m \mathbf{1}[c(X_i) \neq Z_i].
\end{equation}
In some cases, we instead restrict the sample over which we compute the empirical object-identification error. Specifically, we restrict to cooperative game instances and write
\begin{equation}\footnotesize
    \widehat{\mathbf{oer}}_{S}(o \mid \mathtt{CP}) \overset{\mathrm{def}}{=} \widehat{\mathbf{oer}}_{S'}(o)
\end{equation}
where $S' = ((X_i, Y_i) \mid Z_i = \mathtt{CP})$ is the sample $S$ with each triple where $Z_i \neq \mathtt{CP}$ removed. The case $ \widehat{\mathbf{oer}}_{S}(o \mid \mathtt{NC})$ is defined similarly. Based on these, we further define the \textit{cooperation gap}
\begin{equation}\footnotesize
    \Delta_S(o) \overset{\mathrm{def}}{=} \widehat{\mathbf{p}}_S \cdot \widehat{\mathbf{oer}}_{S}(o | \mathtt{NC}) - (1 - \widehat{\mathbf{p}}_S)  \cdot \widehat{\mathbf{oer}}_S(o | \mathtt{CP}).
\end{equation}
This gap describes observed change in (weighted) object-identification error induced by change in cooperation. We often expect $\Delta$ to be positive.\footnote{Delta is negative when the object-identification error is higher on cooperative examples than non-cooperative examples (for simplicity, this assumes $\hat{\mathbf{p}}_S = 0.5$). In practice, we rarely expect cooperation to lead to worse performance.} 

Finally, recall $S \overset{\mathrm{iid}}{\sim} \mathbb{P}_\theta$ and $\mathbb{P}_\theta$ is unknown, so in practice, we can only report the observed error discussed above. Still, we are typically more interested in the \textit{true} or \textit{expected} error for future samples from $\mathbb{P}_\theta$. This quantity tells us how the question-player's hypotheses generalize beyond the random samples we observe. Precisely, the expected cooperation-identification error of a hypothesis $c: \mathcal{X} \to \mathcal{Z}$ is defined
\begin{equation}\footnotesize
    \mathbf{cer}_\theta(c) \overset{\mathrm{def}}{=} \mathbf{E}[\widehat{\mathbf{cer}}_S(c)] = \mathbf{Pr}(c(X) \neq Z)
\end{equation}
where $(X, Y, Z) \sim \mathbb{P}_\theta$. The true (or expected) object-identification error is similarly defined.
\subsubsection{Applicability to Distinct Contexts} 
\label{sec:setup_applicab}
While we have specified our discussion above to promote understanding, one of the benefits of our theoretical framework is that it is fairly general. In fact, the reader may be concerned that our discussion above lacks precise definitions of seemingly important terms; i.e., the feature space $\mathcal{X}$ and the communication policy $\pi_\theta$. These components are intentionally left abstract because our theoretical results make no assumptions on the mechanism through which $\pi_\theta$ influences $\mathbb{P}_\theta$ -- i.e., except Eq.~\eqref{eqn:only_pi_assump}. Further, our results make no assumptions on how the game configurations are represented in the feature space $\mathcal{X}$. This space could correspond to any set of dialogues with/without some associated data (e.g., images). Lastly, the only assumptions on the label spaces are that $\mathcal{Y}$ is finite and $\mathcal{Z}$ is binary. In this sense, our theoretical discussion is applicable to very general scenarios beyond the simple visual-dialogue game considered. We emphasize some examples later in Section~\ref{sec:discussion}.
\subsection{Bounding Cooperation Identification Error}
To motivate our main result, we informally observe that identifying non-cooperation is essentially a problem of identifying distribution-shift. Specifically, we are interested in differences between the two dialogue distributions induced by cooperative and non-cooperative answer-players, respectively.
Luckily, there is a rich literature on the topic of distribution-shift. We take insight, in particular, from work of \citet{ben2007analysis, ben2010theory} which measures shift using the \textit{symmetric difference hypothesis class}. For a set of hypotheses $\mathcal{O} \subseteq \{o \mid o : \mathcal{X} \to \mathcal{Y}\}$, this class contains hypotheses characteristic to disagreements in $\mathcal{O}$:
\begin{equation}\footnotesize
    \mathcal{O} \Delta \mathcal{O} \overset{\mathrm{def}}{=} \{x \mapsto \mathtt{NC}[o(x) \neq o'(x)] \mid o,o' \in \mathcal{O}\}
\end{equation}
where $\mathtt{NC}[\cdot]$ acts like an indicator function, returning $\mathtt{NC}$ for true arguments and $\mathtt{CP}$ otherwise.
Using this class, we identify a relationship between the true error when identifying non-cooperation $\mathbf{cer}_\theta$ and the observed object-identification errors $\widehat{\mathbf{oer}}_{S}(\cdot | \ \mathtt{CP})$ and $\widehat{\mathbf{oer}}_{S}(\cdot | \ \mathtt{NC})$ against the cooperative and non-cooperative answer-player, respectively. While a more traditional learning-theoretic bound would relate $\mathbf{cer}_\theta$ to the empirical observation $\widehat{\mathbf{cer}}_S$ for the same task, our novel bound reveals a connection to the seemingly distinct task of object-identification. Later, this relationship is useful for analyzing how the question-player's communication policy controls the data-distribution so that \textit{both} objectives are improved. Proofs of all result are provided in Section~\ref{sec:proofs}.
\begin{theorem}
\label{thm:main}
Define $\mathcal{O}$ as above and take $\mathcal{C}$ to be sufficiently complex so that $\mathcal{O}\Delta\mathcal{O} \subseteq \mathcal{C}$. Let $d$ be the VC-Dimension of $\mathcal{C}$. Then for any $\delta \in (0,1)$, with probability at least $1-\delta$, for all $o,o' \in \mathcal{O}$,
\begin{equation}\footnotesize
    \mathbf{cer}_\theta(\hat{c}) \leq \widehat{\mathbf{p}}_S + \widehat{\mathbf{oer}}_S(o) - \Delta_S(o') + C
\end{equation}
where $C = (4 +   \sqrt{d \log (2em/d)}) / (\delta \sqrt{2m})$, $S \overset{\mathrm{iid}}{\sim} \mathbb{P}_\theta$, and $\hat{c} \in \argmin_{c \in \mathcal{C}} \widehat{\mathbf{cer}}_{S}(c)$.
\end{theorem}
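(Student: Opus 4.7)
The plan is to bound $\mathbf{cer}_\theta(\hat{c})$ by chaining three inequalities: a uniform VC-deviation bound on $\mathcal{C}$, the ERM property of $\hat{c}$ against an arbitrary symmetric-difference hypothesis from $\mathcal{O}\Delta\mathcal{O}$, and an elementary 0-1 case analysis that re-expresses the cooperation-identification error of such a comparator in terms of object-identification quantities.

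First, I would invoke a standard VC-based uniform deviation bound on $\mathcal{C}$. Symmetrization, combined with Massart's lemma applied to the Sauer-Shelah growth function of a class of VC dimension $d$, gives $\mathbf{E}\bigl[\sup_{c \in \mathcal{C}}(\mathbf{cer}_\theta(c) - \widehat{\mathbf{cer}}_S(c))\bigr] \leq (4 + \sqrt{d \log(2em/d)})/\sqrt{2m}$, with the additive $4$ absorbing envelope and symmetrization constants. Markov's inequality on this non-negative supremum then upgrades the expectation to a tail bound, yielding with probability at least $1-\delta$ that $\mathbf{cer}_\theta(c) \leq \widehat{\mathbf{cer}}_S(c) + C$ for every $c \in \mathcal{C}$ and in particular for $\hat{c}$. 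Because $\hat{c}$ is the empirical risk minimizer over $\mathcal{C}$ and $\mathcal{O}\Delta\mathcal{O} \subseteq \mathcal{C}$, for any $o,o'\in\mathcal{O}$ the hypothesis $h_{o,o'}(x) = \mathtt{NC}[o(x)\neq o'(x)]$ satisfies $\widehat{\mathbf{cer}}_S(\hat{c}) \leq \widehat{\mathbf{cer}}_S(h_{o,o'})$.

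The main obstacle is then translating $\widehat{\mathbf{cer}}_S(h_{o,o'})$ into object-identification quantities referencing the $Y_i$ labels. I would split the empirical error by the true cooperation label: on $\{Z_i=\mathtt{CP}\}$ the error equals $(1/m)\sum_{Z_i=\mathtt{CP}} \mathbf{1}[o(X_i)\neq o'(X_i)]$ and on $\{Z_i=\mathtt{NC}\}$ it equals $(1/m)\sum_{Z_i=\mathtt{NC}} \mathbf{1}[o(X_i)=o'(X_i)]$. For the cooperative half, the standard 0-1 triangle inequality $\mathbf{1}[o(x)\neq o'(x)] \leq \mathbf{1}[o(x)\neq y] + \mathbf{1}[o'(x)\neq y]$ suffices. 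The non-cooperative half is the delicate step: a symmetric analogue would force the contributions of $o$ and $o'$ to appear with the same sign, preventing collapse into the $-\Delta_S(o')$ form. Instead, I would establish the asymmetric inequality $\mathbf{1}[o(x) = o'(x)] \leq 1 + \mathbf{1}[o(x)\neq y] - \mathbf{1}[o'(x)\neq y]$, verifiable by enumerating the five cases for $(o(x), o'(x), y)$ (two where $o(x)=o'(x)$, three where $o(x)\neq o'(x)$).

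Finally, I would sum the two contributions, use the identity $\widehat{\mathbf{oer}}_S(o) = \hat{\mathbf{p}}_S \widehat{\mathbf{oer}}_S(o\mid\mathtt{NC}) + (1-\hat{\mathbf{p}}_S)\widehat{\mathbf{oer}}_S(o\mid\mathtt{CP})$ to collect all $o$-terms into $\widehat{\mathbf{oer}}_S(o)$, and recognize the residual $o'$-terms $-\hat{\mathbf{p}}_S \widehat{\mathbf{oer}}_S(o'\mid\mathtt{NC}) + (1-\hat{\mathbf{p}}_S)\widehat{\mathbf{oer}}_S(o'\mid\mathtt{CP})$ as exactly $-\Delta_S(o')$. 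Chaining with the uniform convergence step and observing that $o,o'\in\mathcal{O}$ were arbitrary yields the stated bound uniformly over all such pairs on a single high-probability event.
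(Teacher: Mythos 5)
Your proposal is correct and follows essentially the same route as the paper: a standard VC uniform-convergence bound on $\mathcal{C}$ (the $1/\delta$ in $C$ indeed comes from Markov's inequality, as in Thm.~6.11 of Shalev-Shwartz and Ben-David), the ERM comparison against $\mathtt{NC}[o(x)\neq o'(x)]\in\mathcal{O}\Delta\mathcal{O}$, and a split of its empirical error by the true cooperation label. Your ``asymmetric inequality'' $\mathbf{1}[o(x)=o'(x)]\leq 1+\mathbf{1}[o(x)\neq y]-\mathbf{1}[o'(x)\neq y]$ is just a rearrangement of the lower half of the classification triangle inequality the paper invokes, so the two arguments coincide.
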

\paragraph{Remarks.} Notice, one sensible choice of $o$ and $o'$ is to pick $o$ which minimizes the observed object-identification error and $o'$ which maximizes $\Delta_S$; this produces the tightest bound on the expected cooperation-identification error. We leave these hypotheses unspecified because later we must make limiting assumptions on the properties of $o$ and $o'$ (e.g., Prop.~\ref{prop:method_bound}). Greater generality here makes our results more broadly applicable. Besides this, we also observe that $C$ goes to 0 as $m$ grows. Ultimately, we ignore $C$ in interpretation, but point out that bounds based on the VC-Dimension (as above) are notoriously loose for most $\mathbb{P}_\theta$. As we are primarily interested in these bounds for purpose of interpretation and algorithm design, this is a non-issue. On the other hand, if practically computable bounds are desired, other (more data-dependent) techniques may be fruitful; e.g., see \citet{dziugaite2017computing}. 
\paragraph{Interpretation.}
As noted, the question-player has some control over the distribution $\mathbb{P}_\theta$ through the communication policy $\pi_\theta$. So, Thm.~\ref{thm:main} can be interpreted to motivate indirect mechanisms for controlling the cooperation-identification error $\mathbf{cer}_\theta(\hat{g})$. Specifically, with respect to $\widehat{\mathbf{oer}}_S$, we can infer that improving performance on the object identification task should implicitly improve performance on the separate task of identifying non-cooperation. The term $\Delta_S$ also offers insight. It suggests certain non-cooperative answer-players -- whose actions induce a large reduction in performance as compared to the cooperative answer-player -- are easy to identify. Stated more plainly, non-cooperative agents reveal themselves by their non-cooperation; this is true, in particular, when their behavior causes large performance drops. In Section~\ref{sec:comm_analysis}, we formalize these concepts further.
\subsection{Analyzing Communication Strategies} 
\label{sec:comm_analysis}
In this section, we analyze methods for the question-player to select the communication policy $\pi_\theta$. In recent dialogue literature, reinforcement learning (RL) has proven successful in teaching agents effective communication strategies. For example, \citet{strub2017end} show this to be the case in the fully cooperative version of \textit{Guess What?!}. Selecting an appropriate reward structure is fundamental to any RL training regime. To this end, we use Thm~\ref{thm:main} to study different reward structures. We consider, in particular, an episodic RL scenario where the discount factor (often called $\gamma$) is set to 1 and the only non-zero reward comes at the end of the episode. So, the question-player holds a full dialogue with the answer-player, guesses the goal-object and answer-player's cooperation based on this dialogue, and then receives a reward dependent on whether the guesses are correct.  Under these assumptions, the question-player selects the communication policy $\pi_\theta$ to maximize:
\begin{equation}\footnotesize
\label{eqn:J}
    J(\theta) = \mathbf{E} \left [ \ \rho(X, Y, Z) \ \right]; \quad (X,Y,Z) \sim \mathbb{P}_\theta
\end{equation}
where $\rho: \mathcal{X}\times\mathcal{Y}\times\mathcal{Z} \to \mathbb{R}$ is the reward structure to be decided. In particular, selection of $\theta$ can often be achieved through policy gradient methods. \citet{williams1992simple} and \citet{sutton1999policy} are attributed with showing we can estimate $\nabla_\theta J(\theta)$ in an un-biased manner through Monte-Carlo estimation. In our implementation in Section~\ref{sec:results}, our particular policy gradient technique is identical to previous work on communication strategies for the \textit{Guess What?!} dataset \cite{strub2017end}. Thus, we focus discussion on the reward structure $\rho$ and understanding its role through a theoretical lens.

To select $\rho$, we first consider some obvious choices without appealing to complex analysis. Specifically, for $c$ fixed, define $\rho(X,Y,Z) = \mathbf{1}[c(X) = Z]$. Then,
\begin{equation}\footnotesize
\label{eqn:rew_naive}
    J(\theta) = 1 - \mathbf{cer}_\theta(c).
\end{equation}
Thus, maximizing $J(\theta)$ is equivalent to minimizing the cooperation-identification error. This reward focuses \textit{only} on identifying non-cooperation. On the other hand, if $\rho(X,Y,Z) = \mathbf{1}[o(X) = Y]$ for some fixed $o$, then
\begin{equation}\footnotesize
\label{eqn:rew_ours}
    J(\theta) = 1 - \mathbf{oer}_\theta(o)
\end{equation}
So, in this case, maximizing $J(\theta)$ minimizes the expected object-identification error.

It is easy to see the trade-off between the two choices discussed above. Each focuses \textit{distinctly} on a single objective of the question-player and it is not clear how these two objectives can relate to each other. To properly answer this, we appeal to analysis.
We first give some definitions.
\begin{defn}
\label{defn:other_def}
We say a hypothesis $o \in \mathcal{O}$ is $\alpha$-improved by $\theta^*$ relative to $\theta$ if $J(\theta^*) \geq J(\theta) + \alpha$ for $\rho(X,Y,Z) = \mathbf{1}[o(X) = Y]$ and $\alpha \geq 0$.
\end{defn}
Simply, Def.~\ref{defn:other_def} formally describes when a communication policy
$\pi_{\theta^*}$ improves the question-player's ability to identify the goal-object. Next, we define efficacy of an answer-player as a property of the errors induced by this player's dialogue.
\begin{defn}
\label{defn:only_def}
We say a non-cooperative answer-player is effective with fixed parameter $\epsilon$ if for all $\delta > 0$ there is $n$ such that for all $\theta, \theta' \in \Theta$, $o \in \mathcal{O}$, and $m \geq n$, we have
\begin{equation}\footnotesize
\mathbf{Pr}( \ | \widehat{\mathbf{oer}}_T(o \mid \mathtt{NC}) - \widehat{\mathbf{oer}}_{S}(o \mid \mathtt{NC}) | > \epsilon) \leq \delta
\end{equation}
where $S \overset{\mathrm{iid}}{\sim} \mathbb{P}_\theta$, $T \overset{\mathrm{iid}}{\sim} \mathbb{P}_{\theta'}$.
\end{defn}
 Def.~\ref{defn:only_def} requires that the error of all question-players converge in probability to the same $O(\epsilon)$-sized region when playing against an effective answer-player. If a non-cooperative answer-player is effective, then regardless of the communication strategy employed by the question-player, we should not expect to observe large changes in object-identification performance against the non-cooperative opponent. Conceptually, this captures the following idea: \textit{Without cooperation, we cannot expect interlocutors to make significant headway.} This assumption is inherently related to an answer-player's failure to abide by Gricean maxims of conversation: uninformative and deceitful  responses violate the maxim of relation and quality, respectively. Instead of explicitly modelling these violations,  Def.~\ref{defn:only_def} focuses on the \textit{effect} of violations -- namely, failure to progress. While violation of other Gricean maxims (i.e., quantity and manner) are less applicable to the simple game we consider, the definition of non-cooperation we give (as an observable effect) still applies.

As alluded, when the non-cooperative answer-player is effective, this non-cooperation is enough to reveal the answer-player to the question-player. The question-player may focus on communicating to identify the goal-object and this will reduce all terms in the upper-bound of Thm.~\ref{thm:main}; subsequently, we expect this communication strategy to be effective not only for identifying the goal-object, but also for identifying non-cooperation. 
\begin{prop}
\label{prop:method_bound}
Let $o,o' \in \mathcal{O}$ and $\theta^*, \theta \in \Theta$. Suppose the non-cooperative answer-player is effective and further suppose both $o$ and $o'$ are $\alpha$-improved by $\theta^*$ relative to $\theta$ with $\alpha > \epsilon$. Then, for any $\delta > 0$, there is $n$ such that for all $m \geq n$, with probability at least $1 - \delta - \gamma$ we have
\begin{equation}\footnotesize
\begin{split}
    & \widehat{\mathbf{p}}_T + \widehat{\mathbf{oer}}_T(o) - \Delta_T(o') \\
    \leq \quad & \widehat{\mathbf{p}}_S + \widehat{\mathbf{oer}}_S(o) - \Delta_S(o') + O(C)
\end{split}
\end{equation}
where $S \overset{\mathrm{iid}}{\sim} \mathbb{P}_\theta$, $T \overset{\mathrm{iid}}{\sim} \mathbb{P}_{\theta^*}$, $\gamma = 2\exp \left( -m \omega^2 / 2 \right)$, $\omega = \alpha - \epsilon$, and
$C = (2m)^{-\frac{1}{2}}\sqrt{\ln 6 - \ln \delta}$.
\end{prop}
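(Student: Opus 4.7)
The plan is to decompose $A_T - A_S$, where $A := \widehat{\mathbf{p}} + \widehat{\mathbf{oer}}(o) - \Delta(o')$ abbreviates the right-hand side of the claim, into four differences that can be handled separately by Hoeffding's inequality together with one of the two hypotheses. The algebraic key is the identity
\begin{equation*}
-\Delta(o') = \widehat{\mathbf{oer}}(o') - 2\widehat{\mathbf{p}}\widehat{\mathbf{oer}}(o' \mid \mathtt{NC}),
\end{equation*}
which follows from the definition of $\Delta$ together with the decomposition $\widehat{\mathbf{oer}}(o') = \widehat{\mathbf{p}}\widehat{\mathbf{oer}}(o' \mid \mathtt{NC}) + (1-\widehat{\mathbf{p}})\widehat{\mathbf{oer}}(o' \mid \mathtt{CP})$. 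Substituting rewrites $A = \widehat{\mathbf{p}} + \widehat{\mathbf{oer}}(o) + \widehat{\mathbf{oer}}(o') - 2\widehat{\mathbf{p}}\widehat{\mathbf{oer}}(o' \mid \mathtt{NC})$, and each of the four resulting pieces of $A_T - A_S$ will be bounded on its own.

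First I would show $\widehat{\mathbf{p}}_T - \widehat{\mathbf{p}}_S \leq O(C)$ by two one-sided Hoeffding tails around the $\theta$-invariant mean $\mathbf{p}_\mathtt{NC}$ guaranteed by Eq.~(1). Next I would bundle $\widehat{\mathbf{oer}}(o)+\widehat{\mathbf{oer}}(o')$ by viewing $W_i := \tfrac{1}{2}(\mathbf{1}[o(X_i) \neq Y_i] + \mathbf{1}[o'(X_i) \neq Y_i])$ as a single $[0,1]$-valued variable; the $\alpha$-improvement hypothesis applied to both $o$ and $o'$ gives $\mathbf{E}[\bar{W}^T] - \mathbf{E}[\bar{W}^S] \leq -\alpha$, and two one-sided Hoeffding tails at deviation $\omega/2$ combined by a union bound yield $\widehat{\mathbf{oer}}_T(o)+\widehat{\mathbf{oer}}_T(o') - \widehat{\mathbf{oer}}_S(o) - \widehat{\mathbf{oer}}_S(o') \leq -2\epsilon$ with probability at least $1 - 2\exp(-m\omega^2/2) = 1 - \gamma$. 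For the remaining cross term, $\widehat{\mathbf{p}}\widehat{\mathbf{oer}}(o' \mid \mathtt{NC})$ equals the empirical mean of $\mathbf{1}[Z_i = \mathtt{NC} \text{ and } o'(X_i) \neq Y_i]$ with expectation $\mathbf{p}_\mathtt{NC}\mathbf{oer}_\theta(o' \mid \mathtt{NC})$, so Hoeffding on $S$ and on $T$, combined with the limiting consequence of effectiveness that $|\mathbf{oer}_{\theta^*}(o' \mid \mathtt{NC}) - \mathbf{oer}_\theta(o' \mid \mathtt{NC})| \leq \epsilon$ (valid once $m \geq n$), produces $-2(\widehat{\mathbf{p}}_T\widehat{\mathbf{oer}}_T(o' \mid \mathtt{NC}) - \widehat{\mathbf{p}}_S\widehat{\mathbf{oer}}_S(o' \mid \mathtt{NC})) \leq 2\epsilon + O(C)$. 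Summing, the $-2\epsilon$ exactly cancels the $+2\epsilon$, leaving $A_T - A_S \leq O(C)$; allocating failure budget $\delta/6$ to each Hoeffding tail then recovers the stated $C = (2m)^{-1/2}\sqrt{\ln 6 - \ln \delta}$, and a union bound over all events delivers total failure probability at most $\delta + \gamma$.

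The hard part will be choosing the decomposition so the two hypotheses apply to disjoint pieces of $A$: $\alpha$-improvement constrains only total $\mathbf{oer}$, whereas effectiveness constrains only the $\mathtt{NC}$-conditional $\mathbf{oer}$. The identity for $-\Delta(o')$ accomplishes this separation by isolating the conditional error inside $-2\widehat{\mathbf{p}}\widehat{\mathbf{oer}}(o' \mid \mathtt{NC})$ and placing the total errors into $\widehat{\mathbf{oer}}(o)+\widehat{\mathbf{oer}}(o')$. A secondary subtlety is that bundling $\widehat{\mathbf{oer}}(o)+\widehat{\mathbf{oer}}(o')$ into a single $[0,1]$-valued empirical mean is what yields exactly $\gamma = 2\exp(-m\omega^2/2)$; invoking $\alpha$-improvement on $o$ and $o'$ via two independent Hoeffding applications would produce $4\exp(-m\omega^2/2)$ and a correspondingly weaker constant.
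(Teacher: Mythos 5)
Your proof is correct and lands on the same bound, but it is organized around a genuinely different decomposition than the paper's. The paper keeps $\Delta$ in its weighted $\mathtt{NC}/\mathtt{CP}$ form: it first proves a lemma giving $\widehat{\mathbf{oer}}_T(o)\leq\widehat{\mathbf{oer}}_S(o)-\epsilon$ with failure probability $\exp(-m\omega^2/2)$ by applying Hoeffding once to the paired differences $\rho(X_i,Y_i,Z_i)-\rho(X_i^*,Y_i^*,Z_i^*)$, which have range $2$ --- so two applications already yield $\gamma=2\exp(-m\omega^2/2)$, and your closing concern that the ``unbundled'' route costs $4\exp(-m\omega^2/2)$ does not apply to the paper's version (though your bundling trick is an equally valid way to hit the same constant). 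It then expands $\widehat{\mathbf{oer}}(o')$ into its $\mathtt{NC}/\mathtt{CP}$-weighted parts on both samples, swaps $\widehat{\mathbf{p}}_S\leftrightarrow\widehat{\mathbf{p}}_T$ using their $C$-closeness to $\mathbf{p}_\mathtt{NC}$, and invokes the effectiveness condition \emph{as an empirical event}, $|\widehat{\mathbf{oer}}_T(o'\mid\mathtt{NC})-\widehat{\mathbf{oer}}_S(o'\mid\mathtt{NC})|\leq\epsilon$, charged to the union bound. Your identity $-\Delta(o')=\widehat{\mathbf{oer}}(o')-2\widehat{\mathbf{p}}\,\widehat{\mathbf{oer}}(o'\mid\mathtt{NC})$ is correct and makes the $\pm\epsilon$ cancellation between $\alpha$-improvement and effectiveness more transparent, and noting that $\widehat{\mathbf{p}}\,\widehat{\mathbf{oer}}(o'\mid\mathtt{NC})$ is an unconditional empirical mean of the indicators $\mathbf{1}[Z_i=\mathtt{NC}\text{ and }o'(X_i)\neq Y_i]$ is a clean way to avoid the paper's $\widehat{\mathbf{p}}$-swapping step. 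The one place you must add detail is your use of Def.~\ref{defn:only_def}: that definition constrains \emph{empirical} conditional errors, whereas you rely on the population inequality $|\mathbf{oer}_{\theta^*}(o'\mid\mathtt{NC})-\mathbf{oer}_\theta(o'\mid\mathtt{NC})|\leq\epsilon$. This does follow --- if the population gap exceeded $\epsilon$ by some $2\eta>0$, the law of large numbers (using $\mathbf{p}_\mathtt{NC}>0$ so the $\mathtt{NC}$-subsample grows) would force the empirical gap above $\epsilon$ with probability tending to $1$, contradicting the definition for $\delta<1$ --- but the deduction needs to be written out, and it is an $m$-independent fact rather than something ``valid once $m\geq n$'' as your parenthetical suggests. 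With that supplied, your budget of four Hoeffding tails at $\delta/6$ plus the two $\gamma/2$ tails gives total failure probability at most $2\delta/3+\gamma\leq\delta+\gamma$, matching the statement.
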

\paragraph{Remarks.} Notice, the result assumes the hypotheses $o, o'$ and policies $\pi_\theta, \pi_{\theta^*}$ are fixed \textit{a priori} to drawing $S, T$. Hence, the bound is only valid for test sets independent from training. Regardless, it is still useful for interpretation and this style of bound produces tighter guarantees than conventional learning-theoretic bounds; i.e., from both analytic and empirical perspectives, respectively \citep{shalev2014understanding, sicilia2021}. 
Like Thm.~\ref{thm:main}, we also use two hypotheses $o, o' \in \mathcal{O}$, but the result is easily specified to the one hypothesis case
by taking $o=o'$ (albeit, this may loosen the bound). In any case, the assumption is not unreasonable. A policy $\pi_{\theta^*}$ -- optimized with respect to just one hypothesis $o$ -- may also offer relative improvement for other hypotheses distinct from $o$. For greater certainty, the term $\delta$ in the probability can be made arbitrarily small provided a large enough sample. Sensibly, the term $\gamma$ indicates the probability is also proportional to how much better the communication mechanism $\pi_{\theta^*}$ is where ``better'' is given precise meaning by comparing population statistics for the objective $J(\cdot)$ via $\alpha$. At minimum, we require $\alpha > \epsilon$, but $\epsilon$ should be small for suitably effective answer-players anyway. Finally, we again, safely ignore $O(C)$ terms, which go to 0 as $m$ grows.
\paragraph{Interpretation.}
The takeaway from Prop.~\ref{prop:method_bound} is an unexpectedly sensible strategy for game success: the question-player focuses communication efforts \textit{only} on identifying the goal-object. When the non-cooperative agent is effective, this communication strategy essentially reduces an upperbound on the true cooperation-identification error. All the while, this strategy very obviously assists the object-recognition task as well. We again note the implication that non-cooperative agents can reveal themselves by their non-cooperation. The question-player need not expend additional effort to uncover them by dialogue actions.
\paragraph{Comparison to Thm.~\ref{thm:main}.}
While Thm.~\ref{thm:main} alludes the interpretation given above -- since the object-identification error is shown to control cooperation identification error in part --, Prop.~\ref{prop:method_bound} distinguishes itself because it considers \textit{all} terms in the upperbound (not just $\widehat{\mathbf{oer}}$). This subtlety is important. In particular, a priori, one cannot be certain that improving the object-identification error from $S$ to $T$ \textit{also} improves the cooperation gap~$\Delta$. Instead, it could be the case that $\Delta$ decreases and the overall bound on $\mathbf{cer}$ is worsened. Aptly, Prop.~\ref{prop:method_bound} isolates the circumstances (i.e., related to Def.~\ref{defn:only_def}), which ensure this adverse effect does not occur. It shows us, under reasonable assumptions, the communication strategy discussed in our interpretation controls the \textit{whole} bound in Thm.~\ref{thm:main} and not just some part. As noted, drawing inference from only a portion of the bound can have unexpected consequences. In fact, this is the topic of much recent work in analysis of learning algorithms \citep{johansson2019support, wu2019domain, zhao2019learning, sicilia2022pac}. 
\paragraph{Comparison to Cooperative Setting.} It is also worthwhile to note that setting the reward as $\rho(X,Y,Z) = \mathbf{1}[o(X) = Y]$ is also an appropriate strategy in the distinct \textit{fully} cooperative \textit{Guess What?!} game. The authors of the original \textit{GuessWhat?!} corpus propose this reward exactly in their follow-up work \cite{strub2017end} which uses RL to learn communication strategies in the fully cooperative setting. Thus, the theoretical results of this section are exceedingly practical. They suggest, for effective non-cooperative agents, we may sensibly employ the same techniques in both the fully cooperative setting and the partially non-cooperative setting. This is beneficial, because the nature of our problem anticipates we will not know the setting in which we operate.
\paragraph{Motivating a Mixed Objective.} As a final note, we remark on how this result may be applied to properly motivate a reward which, \textit{a priori}, can only be heuristically justified. Specifically, a very reasonable suggestion 
would be to combine the rewards in  Eq.~\eqref{eqn:rew_naive} and Eq.~\eqref{eqn:rew_ours} via convex sum. Prior to our theoretical analyses, it is unclear that the two strategies would be complementary. Instead, the objectives could be competing, and so, this mixed strategy could lead to sub-par performance on both tasks. In light of this, our theoretical results help to understand this heuristic more formally. They suggest the two strategies are, in fact, complementary and outline the assumptions necessary for this to be the case. In contrast, empirical analyses can be much more specific to the data used, among other factors. This, in general, is a key differentiation between the analysis we have provided here and the oft-used appeal to heuristics.
\subsection{Proofs}
\label{sec:proofs}
Here, we provide proof of all theoretical results. We first remind the reader of some key definitions for easy reference:
\begin{equation}\scriptsize
    \begin{split}
        & \mathbf{Pr}(Z = \mathtt{NC}) = \mathbf{p}_\mathtt{NC}; \quad (X,Y,Z) \sim \mathbb{P}_\theta; \\
        & \widehat{\mathbf{p}}_S \overset{\mathrm{def}}{=} \tfrac{1}{m}\sum\nolimits_i \mathbf{1}[Z_i = \mathtt{NC}]; \\
        & \widehat{\mathbf{oer}}_S(o) \overset{\mathrm{def}}{=} \tfrac{1}{m}\sum\nolimits_{i=1}^m \mathbf{1}[o(X_i) \neq Y_i]; \\
        & \widehat{\mathbf{cer}}_S(c) \overset{\mathrm{def}}{=} \tfrac{1}{m}\sum\nolimits_{i=1}^m \mathbf{1}[c(X_i) \neq Z_i]; \\
        & \widehat{\mathbf{oer}}_{S}(o \mid \mathtt{CP}) \overset{\mathrm{def}}{=} \widehat{\mathbf{oer}}_{S'}(o), \ \ S' = ((X_i, Y_i) \mid Z_i = \mathtt{CP}); \\
        & \Delta_S(o) \overset{\mathrm{def}}{=} \widehat{\mathbf{p}}_S \cdot \widehat{\mathbf{oer}}_{S}(o | \mathtt{NC}) - (1 - \widehat{\mathbf{p}}_S)  \cdot \widehat{\mathbf{oer}}_S(o | \mathtt{CP}).
    \end{split}
\end{equation}
Please, see Section~\ref{sec:theory_setup} for additional definitions and context.
\paragraph{Theorem~\ref{thm:main}.}
\begin{claim}

\end{claim}
\begin{proof}
For any $c \in \mathcal{C}$ and $\delta \in (0,1)$, we have
\begin{equation}\footnotesize
    \mathbf{Pr}\left ( \mathbf{cer}_\theta(c) \leq \widehat{\mathbf{cer}}_{S}(c) + C\right) \geq 1 - \delta.
\end{equation}
This is a standard VC-bound; e.g., Thm.~6.11 in \citet{shalev2014understanding}. Thus, it suffices to show that for any sample $S$ of size $m$ and any choice of hypotheses $o, o' \in \mathcal{H}$, we have
\begin{equation}\footnotesize
    \widehat{\mathbf{cer}}_{S}(\hat{c}) \leq \widehat{\mathbf{p}}_S + \widehat{\mathbf{oer}}_S(o) - \Delta_S(o').
\end{equation}
Notice first, by choice of $\hat{c}$, for any $c \in \mathcal{C}$ we have 
\begin{equation}\footnotesize
    \widehat{\mathbf{cer}}_{S}(\hat{c}) \leq \widehat{\mathbf{cer}}_{S}(c).
\end{equation}
By definition of $\mathcal{O}\Delta\mathcal{O}$ and its relation to $\mathcal{C}$, for any choice of $o,o' \in \mathcal{O}$, there is some $c' \in \mathcal{C}$ such that $c'(X) = \mathtt{NC}[o(X) \neq o'(X)]$ for all $X$. Recall, $\mathtt{NC}[\cdot]$ acts like an indicator function, returning $\mathtt{NC}$ for true arguments and $\mathtt{CP}$ otherwise. Thus, 
\begin{equation}\footnotesize
\label{eqn:bounding_mer}
\begin{split}
   & \widehat{\mathbf{cer}}_{S}(\hat{c}) \leq \widehat{\mathbf{cer}}_{S}(c') \\
   & = \widehat{\mathbf{p}}_S - \frac{1}{m} \sum_{i \in \{k \mid Z_k = \mathtt{NC}\}} \mathbf{1}[o(X_i) \neq o'(X_i)] \\
   & \qquad + \frac{1}{m} \sum_{j \in \{k \mid Z_k = \mathtt{CP}\}} \mathbf{1}[o(X_j) \neq o'(X_j)].
\end{split}
\end{equation}
The equality follows by applying the definition of $c'$, appropriately grouping terms, and then using the fact: $\mathbf{1}[o(X_i) = o'(X_i)] = 1 - \mathbf{1}[o(X_i) \neq o'(X_i)]$. Now, the triangle inequality for classification error \cite{crammer2007learning, ben2007analysis} tells us for any $(X,Y) \in \mathcal{X} \times \mathcal{Y}$ and any $o, o' \in \mathcal{O}$ we have
\begin{equation}\footnotesize
\begin{split}
& \mathbf{1}[o'(X) \neq Y] - \mathbf{1}[o(X) \neq Y] \leq \mathbf{1}[o(X) \neq o'(X)] \\
& \leq  \mathbf{1}[o(X) \neq Y] + \mathbf{1}[o'(X) \neq Y].
\end{split}
\end{equation}
Applying these bounds to the result of  Eqn.~\eqref{eqn:bounding_mer} and re-arranging terms completes the proof.
\end{proof}

\paragraph{Proposition~\ref{prop:method_bound}.}
\begin{claim}

\end{claim}
We first give a Lemma.
\begin{lemma}
\label{lemma:rl_eta}
Let $o\in \mathcal{O}$ and $\theta, \theta^* \in \Theta$. For any $\epsilon \geq 0$, suppose $o$ is $\alpha$-improved by $\theta^*$ relative to $\theta$ with $\alpha > \epsilon$. Then, 
\begin{equation}\footnotesize
    \mathbf{Pr} \left (  \widehat{\mathbf{oer}}_{T}(o) \geq \widehat{\mathbf{oer}}_{S}(o) - \epsilon \right) \leq \exp ( -\tfrac{m}{2}(\alpha - \epsilon)^2 )
\end{equation}
where $S \overset{\mathrm{iid}}{\sim} \mathbb{P}_\theta$, $T \overset{\mathrm{iid}}{\sim} \mathbb{P}_{\theta^*}$.
\end{lemma}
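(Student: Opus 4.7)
The plan is to unwind the definition of $\alpha$-improvement to get a statement about true (expected) errors, recognize the claimed event as a two-sample concentration event, and then close the gap with Hoeffding's inequality applied to paired Bernoulli differences.

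First I would translate the hypothesis using Eq.~\eqref{eqn:rew_ours}. Setting $\rho(X,Y,Z) = \mathbf{1}[o(X) = Y]$ in the definition of $J$ gives $J(\theta) = 1 - \mathbf{oer}_\theta(o)$, so $o$ being $\alpha$-improved by $\theta^*$ relative to $\theta$ means exactly $\mathbf{oer}_\theta(o) - \mathbf{oer}_{\theta^*}(o) \geq \alpha$. This is the only place the hypothesis is used; everything after this step is concentration.

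Next I would rewrite the event. Add and subtract the expected errors to obtain
\begin{equation}\footnotesize
\widehat{\mathbf{oer}}_T(o) - \widehat{\mathbf{oer}}_S(o) \geq -\epsilon \iff D - \mathbf{E}[D] \geq -\epsilon - (\mathbf{oer}_{\theta^*}(o) - \mathbf{oer}_\theta(o)),
\end{equation}
where $D \overset{\mathrm{def}}{=} \widehat{\mathbf{oer}}_T(o) - \widehat{\mathbf{oer}}_S(o)$. By the inequality from the previous paragraph, the right-hand side is at least $\alpha - \epsilon > 0$, so the event of interest is contained in $\{D - \mathbf{E}[D] \geq \alpha - \epsilon\}$ and it suffices to bound the probability of this larger event.

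Finally I would apply Hoeffding's inequality. Since $|S| = |T| = m$, I can pair the samples and write $D = \tfrac{1}{m}\sum_{i=1}^m W_i$, where $W_i = \mathbf{1}[o(X_i^T) \neq Y_i^T] - \mathbf{1}[o(X_i^S) \neq Y_i^S]$ are independent random variables taking values in $[-1,1]$ (independence across $i$ follows from the i.i.d.\ assumption on $S$ and $T$, and independence of $S$ from $T$). Hoeffding's inequality then gives
\begin{equation}\footnotesize
\mathbf{Pr}\bigl(D - \mathbf{E}[D] \geq \alpha - \epsilon\bigr) \leq \exp\!\Bigl(-\tfrac{2m^2(\alpha - \epsilon)^2}{m \cdot 2^2}\Bigr) = \exp\!\Bigl(-\tfrac{m}{2}(\alpha - \epsilon)^2\Bigr),
\end{equation}
which is the claimed bound. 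The only subtlety to watch is bookkeeping of the direction of the inequalities — since $\alpha$-improvement decreases error while the event concerns an empirical error on $T$ being at least an empirical error on $S$ minus $\epsilon$, it is easy to flip a sign; I would therefore explicitly verify the chain of inequalities in step two before invoking Hoeffding. No other obstacles are anticipated, as the pairing trick is what gives the factor $\tfrac{1}{2}$ in the exponent (an unpaired sum of $2m$ Bernoullis each of range $1/m$ would yield the suboptimal bound $\exp(-m(\alpha-\epsilon)^2)$, so pairing is worth being careful about).
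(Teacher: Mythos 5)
Your proof is correct and follows essentially the same route as the paper's: both translate $\alpha$-improvement into a gap of at least $\alpha$ between the expected errors, form the paired per-sample differences (the paper's $U$ is exactly your $D$), and apply Hoeffding to the average of $m$ independent terms with range $2$ to obtain $\exp(-\tfrac{m}{2}(\alpha-\epsilon)^2)$. Your sign bookkeeping and the containment $\{D \geq -\epsilon\} \subseteq \{D - \mathbf{E}[D] \geq \alpha - \epsilon\}$ are both handled correctly.
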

\begin{proof}
Given samples $S \overset{\mathrm{iid}}{\sim} \mathbb{P}_\theta$ and $T \overset{\mathrm{iid}}{\sim} \mathbb{P}_{\theta^*}$ with $S=(X_i, Y_i, Z_i)_i$ and $T=(X_i^*, Y_i^*, Z_i^*)_i$ define
\begin{equation}\footnotesize
    U = \frac{1}{m}\sum_{i=1}^m \rho(X_i, Y_i, Z_i) - \rho(X^*_i, Y^*_i, Z^*_i).
\end{equation}
Then, $\mathbf{E}[U] = J(\theta) - J(\theta^*)$ and application of Hoeffding's inequality yields
\begin{equation}\footnotesize
\begin{split}
    & \mathbf{Pr} (U \geq -\epsilon ) \leq \exp\left (-\tfrac{m}{2}(J(\theta^*) - J(\theta) - \epsilon)^2\right )
\end{split}
\end{equation}
To finish, apply $J(\theta^*) - J(\theta) - \epsilon \geq \alpha - \epsilon > 0$.
\end{proof}
Now, we proceed with the proof of Prop.~\ref{prop:method_bound}.
\begin{proof}
We begin by bounding the probability of a few events of interest. First,
\begin{equation}\footnotesize
    \mathbf{Pr}(\widehat{\mathbf{oer}}_{T}(o) \geq \widehat{\mathbf{oer}}_{S}(o) - \epsilon ) \leq \frac{\gamma}{2}
\end{equation}
as well as 
\begin{equation}\footnotesize
    \mathbf{Pr} \left (  \widehat{\mathbf{oer}}_{T}(o') \geq \widehat{\mathbf{oer}}_{S}(o') - \epsilon \right) \leq \frac{\gamma}{2}
\end{equation}
by two applications of Lemma~\ref{lemma:rl_eta}. 
Second, by Hoeffding's Inequality, for any $\delta \in (0,1)$ we know with $C = (2m)^{-\frac{1}{2}}\sqrt{\ln 6 - \ln \delta}$
\begin{equation}\footnotesize
\label{eqn:a1}
    \mathbf{Pr}\left (|\widehat{\mathbf{p}}_T - \mathbf{p}_\mathtt{NC} | \geq C\right) \leq \frac{\delta}{3}
\end{equation}
and
\begin{equation}\footnotesize
\label{eqn:a2}
    \mathbf{Pr}\left (|\widehat{\mathbf{p}}_S - \mathbf{p}_\mathtt{NC} | \geq C\right) \leq \frac{\delta}{3}.
\end{equation}
Third, by assumption on the non-cooperative agent, we know we may pick large enough samples $S$ and $T$ so  
\begin{equation}\footnotesize
    \mathbf{Pr}(| \widehat{\mathbf{oer}}_T(o \mid \mathtt{NC}) - \widehat{\mathbf{oer}}_{S}(o \mid \mathtt{NC}) | > \epsilon) \leq \frac{\delta}{3}.
\end{equation}
Applying Boole's inequality bounds the probability that any one of these events holds by $\delta+ \gamma$. Considering the complement event yields a lower bound on the probability that every one of these events fails to hold. Specifically, the lower bound is $1 - \delta - \gamma$. Thus, it is sufficient to show
\begin{equation}\footnotesize
\begin{split}
    & \widehat{\mathbf{p}}_T + \widehat{\mathbf{oer}}_T(o) - \Delta_T(o') \\
    \leq \quad & \widehat{\mathbf{p}}_S + \widehat{\mathbf{oer}}_S(o) - \Delta_S(o') + O(C)
\end{split}
\end{equation}
under assumption of the complement event. To this end, assume the complement. Then, we have directly that
\begin{equation}\footnotesize
    \widehat{\mathbf{p}}_T + \widehat{\mathbf{oer}}_T(o) \leq \widehat{\mathbf{p}}_S + \widehat{\mathbf{oer}}_S(o) + 2C - \epsilon
\end{equation}
So, in the remainder, we concern ourselves with showing $-\Delta_T(o') \leq -\Delta_S(o') + \epsilon + O(C)$. First note that for $T$ it is always true that
\begin{equation}\footnotesize
\begin{split}
    \widehat{\mathbf{oer}}_{T}(o') = \ & \widehat{\mathbf{p}}_T\cdot \widehat{\mathbf{oer}}_{T}(o' | \mathtt{NC}) \\
    & + (1 - \widehat{\mathbf{p}}_T)\cdot \widehat{\mathbf{oer}}_{T}(o' | \mathtt{CP}).
\end{split}
\end{equation}
A similar equation holds for $S$. Then, $\widehat{\mathbf{oer}}_{T}(o') \leq \widehat{\mathbf{oer}}_{S}(o') - \epsilon$ by assumption, so expanding,
\begin{equation}\footnotesize
\label{eqn:rearange}
    \begin{split}
         & (1 - \widehat{\mathbf{p}}_T)\cdot \widehat{\mathbf{oer}}_{T}(o' | \mathtt{CP}) - \widehat{\mathbf{p}}_S\cdot \widehat{\mathbf{oer}}_{S}(o' | \mathtt{NC})\\
        & \leq (1 - \widehat{\mathbf{p}}_S) \widehat{\mathbf{oer}}_{S}(o' | \mathtt{CP}) - \widehat{\mathbf{p}}_T \widehat{\mathbf{oer}}_{T}(o' | \mathtt{NC}) - \epsilon.
    \end{split}
\end{equation}
We also assume $|\widehat{\mathbf{p}}_S - \mathbf{p}_\mathtt{NC}| \leq C$ and $|\mathbf{p}_\mathtt{NC} - \widehat{\mathbf{p}}_T| \leq C$, so applying to both sides of Eq.~\eqref{eqn:rearange} yields
\begin{equation}\footnotesize
\label{eqn:lam_exch}
\begin{split}
    & (1 - \widehat{\mathbf{p}}_T) \cdot \widehat{\mathbf{oer}}_T(o' | \mathtt{CP}) - \widehat{\mathbf{p}}_T \cdot \widehat{\mathbf{oer}}_S(o' | \mathtt{NC}) \\
    & \leq (1 - \widehat{\mathbf{p}}_S)\cdot \widehat{\mathbf{oer}}_S(o' | \mathtt{CP} ) - \widehat{\mathbf{p}}_S \cdot \widehat{\mathbf{oer}}_T(o' | \mathtt{NC} ) \\
    & \qquad - \epsilon + 2C\cdot (\widehat{\mathbf{oer}}_T(o' | \mathtt{NC} ) + \widehat{\mathbf{oer}}_S(o' | \mathtt{NC})) \\
    & \leq (1 - \widehat{\mathbf{p}}_S)\cdot \widehat{\mathbf{oer}}_S(o' | \mathtt{CP} ) - \widehat{\mathbf{p}}_S\cdot \widehat{\mathbf{oer}}_T(o' | \mathtt{NC} ) \\
    & \qquad - \epsilon + 4C
\end{split}
\end{equation}
Finally, the fact $| \widehat{\mathbf{oer}}_S(o' | \mathtt{NC} ) - \widehat{\mathbf{oer}}_T(o' | \mathtt{NC} )| \leq \epsilon$ may be applied to both sides of Eq.~\eqref{eqn:lam_exch} to attain
\begin{equation}\footnotesize
\begin{split}
    & (1 - \widehat{\mathbf{p}}_T) \cdot \widehat{\mathbf{oer}}_T(o' | \mathtt{CP}) - \widehat{\mathbf{p}}_T \cdot \widehat{\mathbf{oer}}_T(o' | \mathtt{NC}) \\
    & \leq (1 - \widehat{\mathbf{p}}_S)\cdot \widehat{\mathbf{oer}}_S(o' | \mathtt{CP} ) - \widehat{\mathbf{p}}_S\cdot \widehat{\mathbf{oer}}_S(o' | \mathtt{NC}) \\
    & \qquad - \epsilon + 4C + (\widehat{\mathbf{p}}_S + \widehat{\mathbf{p}}_T)\epsilon \\
    & \leq (1 - \widehat{\mathbf{p}}_S)\cdot \widehat{\mathbf{oer}}_S(o' | \mathtt{CP} ) - \widehat{\mathbf{p}}_S\cdot \widehat{\mathbf{oer}}_S(o' | \mathtt{NC}) \\
    & \qquad + 4C + \epsilon.
\end{split}
\end{equation}
\end{proof}

\begin{figure*}[ht]
    \centering
    \includegraphics[width=.99\textwidth]{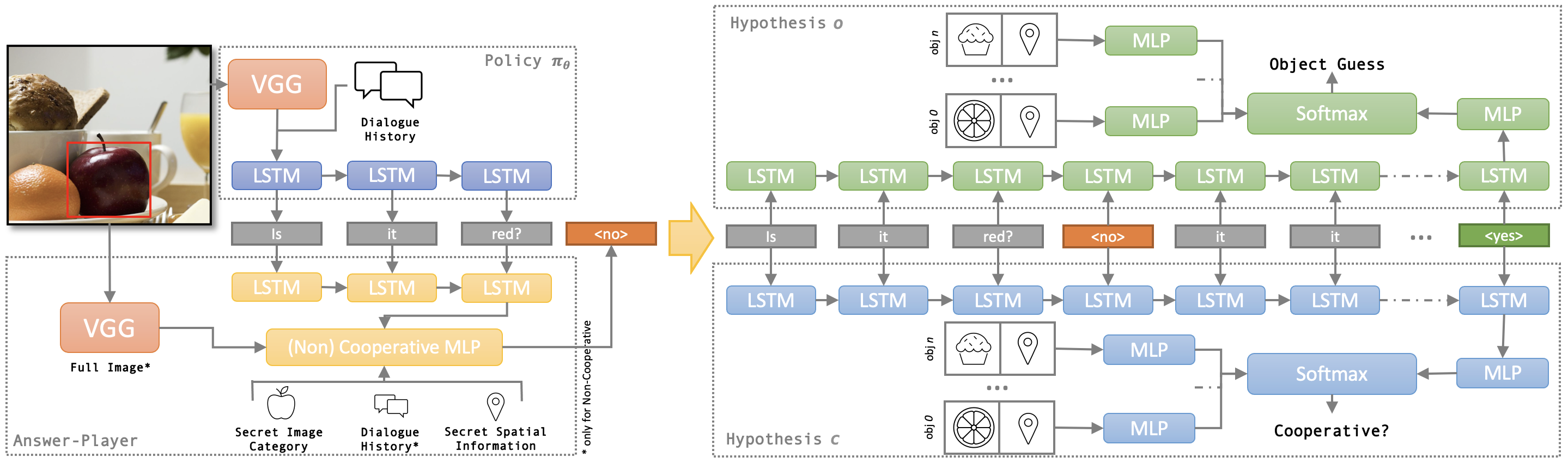}
    \caption{\small{Architecture used in our implementation. 
    Object categories and words are represented using one-hot encoding so an embedding is learned for each object/word. Locations are represented by assigning a common coordinate-system to all images and reporting the object center's image-relative coordinates.}}
    \label{fig:full_architecture-main-text}
\end{figure*}
\section{Experimentation}
\label{sec:results}
In this section, we empirically study the communication strategies just discussed in a theoretical context. We also give insights on the non-cooperative strategies found in the collected data. 

\subsection{Implementation}
Our implementation makes use of the existing framework of \citet{de2017guesswhat}. The primary difference in the game we consider is the included possibility that the answer-player is non-cooperative. As such, many of our model components are based on those proposed by the dataset authors \citep{de2017guesswhat, strub2017end}. 

\paragraph{Question-Player.}
The question-player consists of: a hypothesis $o$ which predicts the goal-object given the object categories,
object locations,
and the dialogue-history;
a hypothesis $c$ which predicts cooperation given the same information; and the communication policy $\pi_\theta$ which generates dialogue given the image\footnote{The image is processed by a VGG network and these features initialize the LSTM state in Figure~\ref{fig:full_architecture-main-text}.} and the current dialogue-history. Each is modelled by a neural-network. Architectures of $o$ and the policy $\pi_\theta$ are identical to the \textit{guesser} model and \textit{questioner} model described by \citet{strub2017end}. We give an overview of the architectures in Figure~\ref{fig:full_architecture-main-text} as well.
\paragraph{Answer-Player.}
The cooperative answer-player is modeled by a neural-network with binary output dependent only on the goal-object and the most immediate question. \citet{strub2017end} demonstrate -- in the cooperative case -- that additional features do not improve performance. On the other hand, non-cooperative behaviors may require more complex modeling. We explore different features for the network modeling the non-cooperative answer-player. During experimentation, we condition on various combinations of the full (and immediate) dialogue-history, the image, and the goal-object. The architectures in both cases are based on the \textit{oracle} model described by \citet{strub2017end} with the addition of an LSTM that allows conditioning on the full dialogue-history. See Figure~\ref{fig:full_architecture-main-text} for an overview.
\paragraph{Training.}
As noted, $o$ is assumed fixed before considering the task of $c$. In practice, we achieve this through supervised learning (SL) by training $o$ on human games in the \textit{Guess What?!} (GW) corpus. Similarly, the cooperative answer-player is trained via SL on the GW corpus. The non-cooperative answer-player uses our novel corpus of non-cooperative games (see Section~\ref{sec:data}). Following \citet{strub2017end}, we pre-train the communication policy $\pi_\theta$ using SL on the GW corpus. In some cases, $\pi_\theta$ is then taught a specific communication strategy by fine-tuning with RL on simulated dialogue. Dialogue is simulated by randomly sampling $Z \sim \mathrm{Bernoulli}(\mathbf{p}_\mathtt{NC})$, drawing an image-object pair uniformly at random from the GW corpus, and allowing the current policy $\pi_\theta$ and the already trained answer-player indicated by $Z$ to converse 5 rounds. The hypothesis $c$ is trained simultaneously on simulated dialogue during the RL phase of $\pi_\theta$ via SL. We do so because $c$ is assumed to minimize sample error in Thm~\ref{thm:main}. While simultaneous gradient methods only approximate this goal, it is more in line with assumptions than fixing $c$ a priori. In general, hyper-parameters are fixed for all experiments and are detailed in the code, which is \href{https://github.com/anthonysicilia/modeling-non-cooperation-tacl2022}{publicly available}. When possible, we follow the parameter choices of \citet{strub2017end}. As an exception, we shorten the number of epochs in the RL phase to 10. Recall, the new network $c$ is trained in this phase as well. For $c$, the learning rate is 1e-4. The new non-cooperative answer-players are trained similarly to the cooperative answer-players -- i.e., as in \citet{strub2017end} -- but we remove early-stopping to avoid the need for a validation set. Our non-cooperative corpus is thus used in its entirety for training since all trained agents are evaluated on novel generated dialogue (see Section~\ref{sec:results_results}). When training with the GW corpus, we use the original train/val split.
\paragraph{Comparison.}
Despite some slight deviations from the original \textit{Guess What?!} training setup, we point out that our fully cooperative results are fairly similar. In Figure~\ref{fig:main-result}, we show error-rate on simulated, cooperative, test dialogues for our question-player trained solely on object-identification; the precise error-rate is 48.8\%. For the most similar training and testing setup used by \citet{strub2017end}, the question-player achieves an error-rate of 46.7\%. 
\subsection{Results}
\label{sec:results_results}
\begin{figure*}
    \centering
    \includegraphics[width=.99\textwidth]{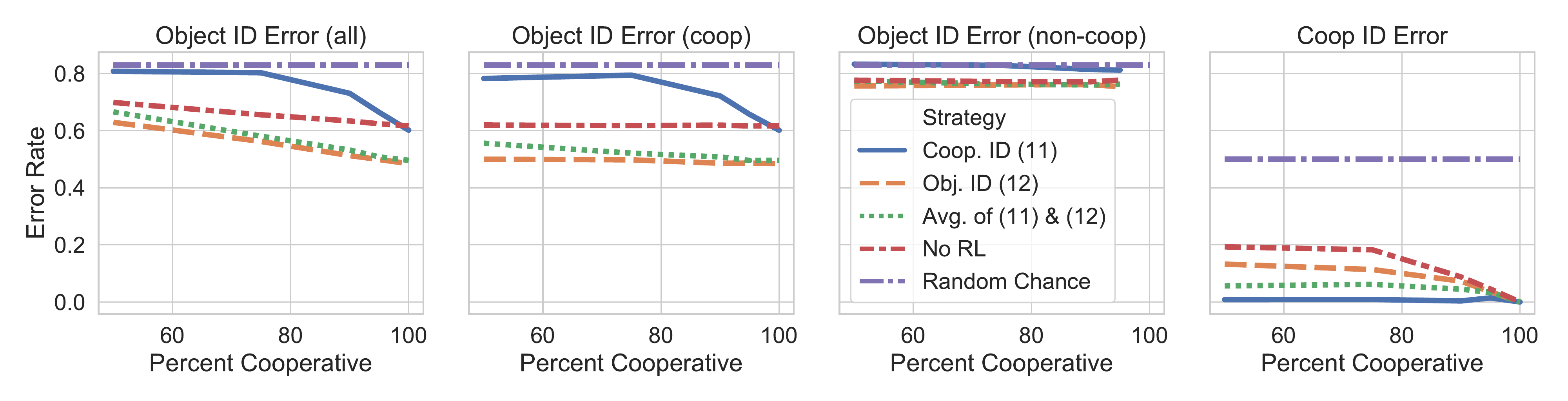}
    \caption{\small{The first three communication strategies (top to bottom in the legend) correspond to using RL with the objective described by Eq.~\eqref{eqn:rew_naive}, Eq.~\eqref{eqn:rew_ours}, or an average of both. Respectively, the last two strategies correspond to using no RL to learn a strategy (i.e., supervised learning only) or to making predictions at random. For object-identification error, parentheses indicate the subset of examples on which the error rate is computed. For non-cooperation detection, the error rate is computed on all samples. Overall, results validate our theoretical argument.}}
    \label{fig:main-result}
\end{figure*}
We report error for cooperation-identification and object-identification. We use a sample $S$ which has simulated dialogue (see \textbf{Training}) between our trained question- and answer-players using about 23K image-object pairs sampled from the GW test set. The objects/images are fixed for all experiments, but dialogue will of course change depending on the question-player. Each data-point in the figures corresponds to a single run using a specified percentage of cooperative examples; i.e., the answer-player's type is selected by sampling $\mathrm{Bernoulli}(\mathbf{p}_\mathtt{NC})$ and setting $\mathbf{p}_\mathtt{NC}$ as the desired $\%$.
\paragraph{Human Non-Cooperative Strategies.}
Between qualitative analysis of this data and conversations with the workers, we determined three primary human strategies for deception: \textit{spamming}, \textit{absolute contradiction}, and \textit{alternate goal objects}. When \textit{spamming}, participants would answer every question with the same answer; e.g., always answering \textit{no}. \textit{Absolute contradiction} was when participants determined the correct answer to the question-player's query and then provided the negation of this. Finally, \textit{alternate goal objects} describes the strategy of selecting an incorrect object in the image and providing answers as if this object was the correct goal. Of these, \textit{spamming} is fairly easy to automatically detect; i.e., by searching for games where all answers are identical. We find 19\% of the collected non-cooperative dialogues contain entirely \textit{spam} answers. This, of course, does not account for mixed strategies within a game, but it does indicate the dataset is not dominated by the least complex strategy. Lastly, we remind the reader, some non-cooperative strategies directly describe violations of Gricean maxims. In particular, \textit{absolute contradiction} and \textit{alternate goal objects} violate the maxim of quality, while \textit{spamming} violates the maxim of relevance. Due to the answer-player's simple vocabulary and the greater control given to the question-player (i.e., in directing conversation topic and length), the maxims of manner and quantity are difficult for the answer-player to violate. So, it is expected observed strategies do not violate these maxims.
\paragraph{Modeling Human Non-Cooperation.}
We further studied strategies in the autonomous non-cooperative answer-players. Notice, besides \textit{spamming}, the human strategies may require knowledge of the full dialogue history as well as other objects in the image. We tested whether the autonomous answer-player utilized this information by training multiple answer-players with different information access: the first produced answers conditioned only on the goal-object and the most immediate question (1), the next two were also conditioned on the full dialogue-history (2) \textit{or} the full image (3), and the last was conditioned on all of these features (4). We paired these non-cooperative answer-players with a question-player whose communication strategy focused on the object-identification task; i.e., using Eqn.~\eqref{eqn:rew_ours}. Answer-players~2,~3,~and~4 induced an object-identification error outside a 95\% confidence interval\footnote{An upper bound on true error induced by the 1st answer-player is 0.749 with confidence 95\% (Hoeffding Bound $\approx$ 10K samples). The sample error of the 2nd, 3rd, and 4th answer-player are, respectively, 0.756, 0.757, and 0.752.} of answer-player~1. In contrast, \citet{strub2017end} found that cooperative answer-players only needed access to the goal-object and the most immediate question to perform well. This result indicates the complexities inherent to deception and suggests that distinct strategies were learned when non-cooperative answer-players had access to more information. In the remainder, we focus on non-cooperative answer-player 2 with access to the full dialogue history. Our interpretation for answer-players 1, 3, and 4 is largely similar.
\paragraph{Empirical Validity of Def.~\ref{defn:only_def}.}
Our next observation concerns the formal definition of \textit{effective} given in Section~\ref{sec:theory} Def.~\ref{defn:only_def}. While the limiting property required by the definition is not easy to measure empirically, we observe in Figure~\ref{fig:main-result} that the object-identification error on \textit{non-cooperative} examples is relatively stable across question-player communication strategies. This fact -- that the non-cooperative answer-player exhibits behavior consistent with an \textit{effective} answer-player -- points to the validity of our theory. Recall, an effective answer-player is assumed in Prop.~\ref{prop:method_bound}. 
\paragraph{Empirical Validity of Proposition~\ref{prop:method_bound}.}
Finally, the primary conclusion of our theoretical analysis was that communication strategies which focus \textit{only} on the object-identification task should be effective for \textit{both} object-identification and cooperation-identification. Figure~\ref{fig:main-result} confirms this. Selecting a communication strategy based on improving object-identification improves object-identification as expected. Further, on the potentially opposing objective of identifying non-cooperation, this strategy is also effective. It far improves over a random baseline and also improves over the baseline which uses no RL-based strategy. On the other hand, the communication strategy which focuses only on the identification of non-cooperation fails at the opposing task of object-identification. This strategy performs almost as badly as a random baseline when the percent of non-cooperative examples is large and is also consistently worse than the baseline which uses no RL. The mixture of both strategies seems to achieve good middle ground. Recall, while this strategy may be heuristically intuited, our theoretical results formally justified this strategy as well. 
\section{Conclusion}
\label{sec:discussion}
Combining tools from learning theory, reinforcement learning, and supervised learning, we model partially non-cooperative communicative strategies in dialogue. Understanding such strategies is essential when building robust agents capable of conversing with parties of varying intent. Our theoretical and empirical findings suggest non-cooperative agents may sufficiently reveal themselves through their non-cooperative communicative behavior.

Although the dialogue game studied is simple, the results have ramifications for more complex dialogue systems. Our theoretical results, in particular, are not limited in this sense and may apply to designing communication strategies in distinct contexts. As noted in Section~\ref{sec:setup_applicab}, the limited assumptions we make facilitate this. For example, classifying intents and asking the right clarification questions is crucial to decision making in dialogue \citep{purver2003means, devault2007managing, khalid2020combining}. Our theory is directly applicable to this setting and could be applied to inform learning objectives for any dialogue agent that asks clarification questions to make a classification.
A real-world example of this is the online-banking setting studied by \citet{dhole2020resolving}, in which the dialogue agent asks clarification questions to decide the type of account a user would like to open. If we suppose some users may be non-cooperative in this context, our theoretical setup is satisfied: there is some feature space (the dialogues), the label space of user-intents is finite, users are labeled with a binary indicator of cooperation, and the dialogue agent can control the distribution over which it learns by asking clarification questions.
Our theoretical results should apply to many similar dialogue systems that can ask clarification questions or other types of questions. The only stipulations are that the theoretical setup is satisfied (e.g., in the manner just shown) and that our proposed assumptions on the nature of non-cooperative dialogue still hold (i.e., see Section~\ref{sec:comm_analysis}, Def.~\ref{defn:only_def}). 

To promote continued research, the collected corpus as well as our code are \href{https://github.com/anthonysicilia/modeling-non-cooperation-tacl2022}{publicly available}.\footnote{\url{https://github.com/anthonysicilia/modeling-non-cooperation-tacl2022}}

\section{Ethical Considerations}
We have described a research prototype. The proposed dataset does not include sensitive or personal data. Our human subject board approved our protocol. Human subjects participated voluntarily and were compensated fairly for their time. The publicly available dataset is fully anonymized. 

 The proposed architecture relies on pretrained models such as word or image embeddings so any harm or bias associated with these models may be present in our model. We believe general methods that propose to mitigate harms can resolve these issues.
 
 \section*{Acknowledgement}
 We would like to thank Matthew Stone, Raquel Fernandez, Katherine Atwell, and the anonymous reviewers for their helpful comments and suggestions. We also thank the action editors: Cindy Robinson, Ani Nenkova, and Claire Gardent.

\typeout{}
\bibliography{tacl2018}

\begin{thebibliography}{63}
\expandafter\ifx\csname natexlab\endcsname\relax\def\natexlab#1{#1}\fi

\bibitem[{Abouelenien et~al.(2014)Abouelenien, P{\'e}rez-Rosas, Mihalcea, and
  Burzo}]{abouelenien2014deception}
Mohamed Abouelenien, Veronica P{\'e}rez-Rosas, Rada Mihalcea, and Mihai Burzo.
  2014.
\newblock Deception detection using a multimodal approach.
\newblock In \emph{Proceedings of the 16th International Conference on
  Multimodal Interaction}, pages 58--65.

\bibitem[{Asher and Lascarides(2013)}]{asher2013strategic}
Nicholas Asher and Alex Lascarides. 2013.
\newblock Strategic conversation.
\newblock \emph{Semantics and Pragmatics}, 6:2--1.

\bibitem[{Attias et~al.(2019)Attias, Kontorovich, and
  Mansour}]{attias2019improved}
Idan Attias, Aryeh Kontorovich, and Yishay Mansour. 2019.
\newblock Improved generalization bounds for robust learning.
\newblock In \emph{Algorithmic Learning Theory}, pages 162--183. PMLR.

\bibitem[{Atwell et~al.(2022)Atwell, Sicilia, Hwang, and
  Alikhani}]{atwell-etal-2022-change}
Katherine Atwell, Anthony Sicilia, Seong~Jae Hwang, and Malihe Alikhani. 2022.
\newblock \href {https://doi.org/10.18653/v1/2022.findings-acl.68} {The change
  that matters in discourse parsing: Estimating the impact of domain shift on
  parser error}.
\newblock In \emph{Findings of the Association for Computational Linguistics:
  ACL 2022}, pages 824--845, Dublin, Ireland. Association for Computational
  Linguistics.

\bibitem[{Barlier et~al.(2015)Barlier, Perolat, Laroche, and
  Pietquin}]{barlier2015human}
Merwan Barlier, Julien Perolat, Romain Laroche, and Olivier Pietquin. 2015.
\newblock Human-machine dialogue as a stochastic game.
\newblock In \emph{16th Annual SIGdial Meeting on Discourse and Dialogue
  (SIGDIAL 2015)}.

\bibitem[{Ben-David et~al.(2010)Ben-David, Blitzer, Crammer, Kulesza, Pereira,
  and Vaughan}]{ben2010theory}
Shai Ben-David, John Blitzer, Koby Crammer, Alex Kulesza, Fernando Pereira, and
  Jennifer~Wortman Vaughan. 2010.
\newblock A theory of learning from different domains.
\newblock \emph{Machine Learning}, 79(1-2):151--175.

\bibitem[{Ben-David et~al.(2007)Ben-David, Blitzer, Crammer, and
  Pereira}]{ben2007analysis}
Shai Ben-David, John Blitzer, Koby Crammer, and Fernando Pereira. 2007.
\newblock Analysis of representations for domain adaptation.
\newblock In \emph{Advances in Neural Information Processing Systems}, pages
  137--144.

\bibitem[{Bubeck et~al.(2019)Bubeck, Lee, Price, and
  Razenshteyn}]{bubeck2019adversarial}
S{\'e}bastien Bubeck, Yin~Tat Lee, Eric Price, and Ilya Razenshteyn. 2019.
\newblock Adversarial examples from computational constraints.
\newblock In \emph{International Conference on Machine Learning}, pages
  831--840. PMLR.

\bibitem[{Cercas~Curry and Rieser(2018)}]{cercas-curry-rieser-2018-metoo}
Amanda Cercas~Curry and Verena Rieser. 2018.
\newblock \href {https://doi.org/10.18653/v1/W18-0802} {{\#}{M}e{T}oo {A}lexa:
  How conversational systems respond to sexual harassment}.
\newblock In \emph{Proceedings of the Second {ACL} Workshop on Ethics in
  Natural Language Processing}, pages 7--14, New Orleans, Louisiana, USA.
  Association for Computational Linguistics.

\bibitem[{Chou and Lee(2020)}]{chouyour}
Huang-Cheng Chou and Chi-Chun Lee. 2020.
\newblock “{Y}our behavior makes me think it is a lie”: Recognizing
  perceived deception using multimodal data in dialog games.
\newblock In \emph{2020 Asia-Pacific Signal and Information Processing
  Association Annual Summit and Conference (APSIPA ASC)}, pages 393--402. IEEE.

\bibitem[{Conroy et~al.(2015)Conroy, Rubin, and Chen}]{conroy2015automatic}
Nadia~K. Conroy, Victoria~L. Rubin, and Yimin Chen. 2015.
\newblock Automatic deception detection: Methods for finding fake news.
\newblock \emph{Proceedings of the Association for Information Science and
  Technology}, 52(1):1--4.

\bibitem[{Crammer et~al.(2007)Crammer, Kearns, and
  Wortman}]{crammer2007learning}
Koby Crammer, Michael Kearns, and Jennifer Wortman. 2007.
\newblock Learning from multiple sources.
\newblock In \emph{Advances in Neural Information Processing Systems}, pages
  321--328.

\bibitem[{Cullina et~al.(2018)Cullina, Bhagoji, and Mittal}]{cullina2018pac}
Daniel Cullina, Arjun~Nitin Bhagoji, and Prateek Mittal. 2018.
\newblock {PAC}-learning in the presence of adversaries.
\newblock \emph{Advances in Neural Information Processing Systems}, 31.

\bibitem[{Das et~al.(2017)Das, Kottur, Gupta, Singh, Yadav, Moura, Parikh, and
  Batra}]{das2017visual}
Abhishek Das, Satwik Kottur, Khushi Gupta, Avi Singh, Deshraj Yadav,
  Jos{\'e}~M.F. Moura, Devi Parikh, and Dhruv Batra. 2017.
\newblock Visual dialog.
\newblock In \emph{Proceedings of the IEEE Conference on Computer Vision and
  Pattern Recognition}, pages 326--335.

\bibitem[{De~Vries et~al.(2017)De~Vries, Strub, Chandar, Pietquin, Larochelle,
  and Courville}]{de2017guesswhat}
Harm De~Vries, Florian Strub, Sarath Chandar, Olivier Pietquin, Hugo
  Larochelle, and Aaron Courville. 2017.
\newblock Guesswhat?! visual object discovery through multi-modal dialogue.
\newblock In \emph{Proceedings of the IEEE Conference on Computer Vision and
  Pattern Recognition}, pages 5503--5512.

\bibitem[{DeVault and Stone(2007)}]{devault2007managing}
David DeVault and Matthew Stone. 2007.
\newblock Managing ambiguities across utterances in dialogue.
\newblock In \emph{Proceedings of the 11th Workshop on the Semantics and
  Pragmatics of Dialogue (Decalog 2007)}, pages 49--56.

\bibitem[{Dhole(2020)}]{dhole2020resolving}
Kaustubh~D. Dhole. 2020.
\newblock Resolving intent ambiguities by retrieving discriminative clarifying
  questions.
\newblock \emph{arXiv preprint arXiv:2008.07559}.

\bibitem[{Diochnos et~al.(2019)Diochnos, Mahloujifar, and
  Mahmoody}]{diochnos2019lower}
Dimitrios~I. Diochnos, Saeed Mahloujifar, and Mohammad Mahmoody. 2019.
\newblock Lower bounds for adversarially robust {PAC} learning.
\newblock \emph{arXiv:1906.05815v1}.

\bibitem[{Dziugaite and Roy(2017)}]{dziugaite2017computing}
Gintare~Karolina Dziugaite and Daniel~M. Roy. 2017.
\newblock Computing nonvacuous generalization bounds for deep (stochastic)
  neural networks with many more parameters than training data.
\newblock \emph{arXiv preprint arXiv:1703.11008}.

\bibitem[{Efstathiou and Lemon(2014)}]{efstathiou2014learning}
Ioannis Efstathiou and Oliver Lemon. 2014.
\newblock Learning non-cooperative dialogue behaviours.
\newblock In \emph{Proceedings of the 15th Annual Meeting of the Special
  Interest Group on Discourse and Dialogue (SIGDIAL)}, pages 60--68.

\bibitem[{Feige et~al.(2015)Feige, Mansour, and Schapire}]{feige2015learning}
Uriel Feige, Yishay Mansour, and Robert Schapire. 2015.
\newblock Learning and inference in the presence of corrupted inputs.
\newblock In \emph{Conference on Learning Theory}, pages 637--657. PMLR.

\bibitem[{Galati and Brennan(2021)}]{galati2020retained}
Alexia Galati and Susan~E. Brennan. 2021.
\newblock What is retained about common ground? {D}istinct effects of
  linguistic and visual co-presence.
\newblock \emph{Cognition}, 215:104809.

\bibitem[{Ganin and Lempitsky(2015)}]{ganin2015unsupervised}
Yaroslav Ganin and Victor Lempitsky. 2015.
\newblock Unsupervised domain adaptation by backpropagation.
\newblock In \emph{International Conference on Machine Learning}, pages
  1180--1189.

\bibitem[{Georgila and Traum(2011{\natexlab{a}})}]{georgila2011learning}
Kallirroi Georgila and David Traum. 2011{\natexlab{a}}.
\newblock Learning culture-specific dialogue models from non culture-specific
  data.
\newblock In \emph{International Conference on Universal Access in
  Human-Computer Interaction}, pages 440--449. Springer.

\bibitem[{Georgila and Traum(2011{\natexlab{b}})}]{georgila2011reinforcement}
Kallirroi Georgila and David Traum. 2011{\natexlab{b}}.
\newblock Reinforcement learning of argumentation dialogue policies in
  negotiation.
\newblock In \emph{Twelfth Annual Conference of the International Speech
  Communication Association}.

\bibitem[{Germain et~al.(2020)Germain, Habrard, Laviolette, and
  Morvant}]{germain2020pac}
Pascal Germain, Amaury Habrard, Fran{\c{c}}ois Laviolette, and Emilie Morvant.
  2020.
\newblock {PAC}-bayes and domain adaptation.
\newblock \emph{Neurocomputing}, 379:379--397.

\bibitem[{Gretton et~al.(2012)Gretton, Borgwardt, Rasch, Sch{\"o}lkopf, and
  Smola}]{gretton2012kernel}
Arthur Gretton, Karsten~M. Borgwardt, Malte~J. Rasch, Bernhard Sch{\"o}lkopf,
  and Alexander Smola. 2012.
\newblock A kernel two-sample test.
\newblock \emph{The Journal of Machine Learning Research}, 13(1):723--773.

\bibitem[{Haber et~al.(2019)Haber, Baumg{\"a}rtner, Takmaz, Gelderloos, Bruni,
  and Fern{\'a}ndez}]{haber-etal-2019-photobook}
Janosch Haber, Tim Baumg{\"a}rtner, Ece Takmaz, Lieke Gelderloos, Elia Bruni,
  and Raquel Fern{\'a}ndez. 2019.
\newblock \href {https://doi.org/10.18653/v1/P19-1184} {The {P}hoto{B}ook
  dataset: Building common ground through visually-grounded dialogue}.
\newblock In \emph{Proceedings of the 57th Annual Meeting of the Association
  for Computational Linguistics}, pages 1895--1910, Florence, Italy.
  Association for Computational Linguistics.

\bibitem[{Jameson et~al.(1994)Jameson, Kipper, Ndiaye, Sch{\"a}fer, Simons,
  Weis, and Zimmermann}]{jameson1994cooperating}
Anthony Jameson, Bernhard Kipper, Alassane Ndiaye, Ralph Sch{\"a}fer, Joep
  Simons, Thomas Weis, and Detlev Zimmermann. 1994.
\newblock Cooperating to be noncooperative: The dialog system pracma.
\newblock In \emph{Annual Conference on Artificial Intelligence}, pages
  106--117. Springer.

\bibitem[{Johansson et~al.(2019)Johansson, Sontag, and
  Ranganath}]{johansson2019support}
Fredrik~D. Johansson, David Sontag, and Rajesh Ranganath. 2019.
\newblock Support and invertibility in domain-invariant representations.
\newblock In \emph{The 22nd International Conference on Artificial Intelligence
  and Statistics}, pages 527--536. PMLR.

\bibitem[{Kajii and Morris(1997)}]{kajii1997robustness}
Atsushi Kajii and Stephen Morris. 1997.
\newblock The robustness of equilibria to incomplete information.
\newblock \emph{Econometrica: Journal of the Econometric Society}, pages
  1283--1309.

\bibitem[{Keizer et~al.(2017)Keizer, Guhe, Cuay{\'a}huitl, Efstathiou,
  Engelbrecht, Dobre, Lascarides, and Lemon}]{keizer2017evaluating}
Simon Keizer, Markus Guhe, Heriberto Cuay{\'a}huitl, Ioannis Efstathiou,
  Klaus-Peter Engelbrecht, Mihai Dobre, Alex Lascarides, and Oliver Lemon.
  2017.
\newblock \href {https://aclanthology.org/E17-2077} {Evaluating persuasion
  strategies and deep reinforcement learning methods for negotiation dialogue
  agents}.
\newblock In \emph{Proceedings of the 15th Conference of the {E}uropean Chapter
  of the Association for Computational Linguistics: Volume 2, Short Papers},
  pages 480--484, Valencia, Spain. Association for Computational Linguistics.

\bibitem[{Khalid et~al.(2020)Khalid, Alikhani, and Stone}]{khalid2020combining}
Baber Khalid, Malihe Alikhani, and Matthew Stone. 2020.
\newblock Combining cognitive modeling and reinforcement learning for
  clarification in dialogue.
\newblock In \emph{Proceedings of the 28th International Conference on
  Computational Linguistics}, pages 4417--4428.

\bibitem[{Lee(2000)}]{lee2000ethics}
Mark~G. Lee. 2000.
\newblock The ethics of deception: Why {AI} must study selfish behaviour.
\newblock \emph{Cognitive Science Research Papers-University of Birmingham
  CSRP}.

\bibitem[{Levitan(2019)}]{levitan2019deception}
Sarah~Ita Levitan. 2019.
\newblock \emph{Deception in spoken dialogue: {C}lassification and individual
  differences}.
\newblock Ph.D. thesis, Columbia University.

\bibitem[{Lipton et~al.(2018)Lipton, Wang, and Smola}]{lipton2018detecting}
Zachary Lipton, Yu-Xiang Wang, and Alexander Smola. 2018.
\newblock Detecting and correcting for label shift with black box predictors.
\newblock In \emph{International Conference on Machine Learning}, pages
  3122--3130. PMLR.

\bibitem[{Montasser et~al.(2020)Montasser, Hanneke, and
  Srebro}]{montasser2020reducing}
Omar Montasser, Steve Hanneke, and Nati Srebro. 2020.
\newblock Reducing adversarially robust learning to non-robust {PAC} learning.
\newblock \emph{Advances in Neural Information Processing Systems},
  33:14626--14637.

\bibitem[{Nash(1951)}]{nash1951non}
John Nash. 1951.
\newblock Non-cooperative games.
\newblock \emph{Annals of Mathematics}, pages 286--295.

\bibitem[{Pinker et~al.(2008)Pinker, Nowak, and Lee}]{pinker2008logic}
Steven Pinker, Martin~A. Nowak, and James~J. Lee. 2008.
\newblock The logic of indirect speech.
\newblock \emph{Proceedings of the National Academy of sciences},
  105(3):833--838.

\bibitem[{Pl{\"u}ss(2010)}]{pluss2010non}
Brian Pl{\"u}ss. 2010.
\newblock Non-cooperation in dialogue.
\newblock In \emph{Proceedings of the ACL 2010 Student Research Workshop},
  pages 1--6.

\bibitem[{Pl{\"u}ss(2014)}]{pluss2014computational}
Brian Pl{\"u}ss. 2014.
\newblock \emph{A Computational Model of Non-Cooperation in Natural Language
  Dialogue}.
\newblock Ph.D. thesis, The Open University.

\bibitem[{Purver et~al.(2003)Purver, Ginzburg, and Healey}]{purver2003means}
Matthew Purver, Jonathan Ginzburg, and Patrick Healey. 2003.
\newblock On the means for clarification in dialogue.
\newblock In \emph{Current and New Directions in Discourse and Dialogue}, pages
  235--255. Springer.

\bibitem[{Rabanser et~al.(2019)Rabanser, G{\"u}nnemann, and
  Lipton}]{rabanser2018failing}
Stephan Rabanser, Stephan G{\"u}nnemann, and Zachary Lipton. 2019.
\newblock Failing loudly: An empirical study of methods for detecting dataset
  shift.
\newblock \emph{Advances in Neural Information Processing Systems}, 32.

\bibitem[{Schlangen(2019)}]{schlangen2019grounded}
David Schlangen. 2019.
\newblock Grounded agreement games: Emphasizing conversational grounding in
  visual dialogue settings.
\newblock \emph{arXiv:1908.11279v1}.

\bibitem[{Schoenauer-Sebag et~al.(2019)Schoenauer-Sebag, Heinrich, Schoenauer,
  Sebag, Wu, and Altschuler}]{schoenauer2019multi}
Alice Schoenauer-Sebag, Louise Heinrich, Marc Schoenauer, Michele Sebag,
  Lani~F. Wu, and Steve~J. Altschuler. 2019.
\newblock Multi-domain adversarial learning.
\newblock In \emph{International Conference on Learning Representation}.

\bibitem[{Serban et~al.(2018)Serban, Poll, and Visser}]{serban2018adversarial}
Alexandru~Constantin Serban, Erik Poll, and Joost Visser. 2018.
\newblock Adversarial examples - {A} complete characterisation of the
  phenomenon.
\newblock \emph{arXiv:1810.01185v2}.

\bibitem[{Shalev-Shwartz and Ben-David(2014)}]{shalev2014understanding}
Shai Shalev-Shwartz and Shai Ben-David. 2014.
\newblock \emph{Understanding Machine Learning: From Theory to Algorithms}.
\newblock Cambridge university press.

\bibitem[{Shapley(1953)}]{shapley1953stochastic}
Lloyd~S. Shapley. 1953.
\newblock Stochastic games.
\newblock \emph{Proceedings of the national academy of sciences},
  39(10):1095--1100.

\bibitem[{Shim and Arkin(2013)}]{shim2013taxonomy}
Jaeeun Shim and Ronald~C. Arkin. 2013.
\newblock A taxonomy of robot deception and its benefits in hri.
\newblock In \emph{2013 IEEE International Conference on Systems, Man, and
  Cybernetics}, pages 2328--2335. IEEE.

\bibitem[{Shu et~al.(2017)Shu, Sliva, Wang, Tang, and Liu}]{shu2017fake}
Kai Shu, Amy Sliva, Suhang Wang, Jiliang Tang, and Huan Liu. 2017.
\newblock Fake news detection on social media: A data mining perspective.
\newblock \emph{ACM SIGKDD Explorations Newsletter}, 19(1):22--36.

\bibitem[{Sicilia et~al.(2022)Sicilia, Atwell, Alikhani, and
  Hwang}]{sicilia2022pac}
Anthony Sicilia, Katherine Atwell, Malihe Alikhani, and Seong~Jae Hwang. 2022.
\newblock {PAC}-bayesian domain adaptation bounds for multiclass learners.
\newblock In \emph{The 38th Conference on Uncertainty in Artificial
  Intelligence}.

\bibitem[{Sicilia et~al.(2021)Sicilia, Zhao, Sosnovskikh, and
  Hwang}]{sicilia2021}
Anthony Sicilia, Xingchen Zhao, Anastasia Sosnovskikh, and Seong~Jae Hwang.
  2021.
\newblock {PAC} bayesian performance guarantees for deep (stochastic) networks
  in medical imaging.
\newblock In \emph{Medical Image Computing and Computer Assisted Intervention
  -- MICCAI 2021}, pages 560--570, Cham. Springer International Publishing.

\bibitem[{Soldner et~al.(2019)Soldner, P{\'e}rez-Rosas, and
  Mihalcea}]{soldner2019box}
Felix Soldner, Ver{\'o}nica P{\'e}rez-Rosas, and Rada Mihalcea. 2019.
\newblock Box of lies: Multimodal deception detection in dialogues.
\newblock In \emph{Proceedings of the 2019 Conference of the North American
  Chapter of the Association for Computational Linguistics: Human Language
  Technologies, Volume 1 (Long and Short Papers)}, pages 1768--1777.

\bibitem[{Strub et~al.(2017)Strub, De~Vries, Mary, Piot, Courvile, and
  Pietquin}]{strub2017end}
Florian Strub, Harm De~Vries, Jeremie Mary, Bilal Piot, Aaron Courvile, and
  Olivier Pietquin. 2017.
\newblock End-to-end optimization of goal-driven and visually grounded dialogue
  systems.
\newblock In \emph{Proceedings of the 26th International Joint Conference on
  Artificial Intelligence}, pages 2765--2771.

\bibitem[{Sutton et~al.(1999)Sutton, McAllester, Singh, Mansour
  et~al.}]{sutton1999policy}
Richard~S. Sutton, David~A. McAllester, Satinder~P. Singh, Yishay Mansour,
  et~al. 1999.
\newblock Policy gradient methods for reinforcement learning with function
  approximation.
\newblock In \emph{Advances in Neural Information Processing Systems},
  volume~99, pages 1057--1063. Citeseer.

\bibitem[{Traum et~al.(2008)Traum, Swartout, Gratch, and
  Marsella}]{traum2008virtual}
David Traum, William Swartout, Jonathan Gratch, and Stacy Marsella. 2008.
\newblock A virtual human dialogue model for non-team interaction.
\newblock In \emph{Recent Trends in Discourse and Dialogue}, pages 45--67.
  Springer.

\bibitem[{Vourliotakis et~al.(2014)Vourliotakis, Efstathiou, and
  Rieser}]{vourliotakis2014detecting}
Aimilios Vourliotakis, Ioannis Efstathiou, and Verena Rieser. 2014.
\newblock Detecting deception in non-cooperative dialogue: A smarter adversary
  cannot be fooled that easily.
\newblock In \emph{18th Workshop on the Semantics and Pragmatics of Dialogue},
  pages 252--254.

\bibitem[{Williams(1992)}]{williams1992simple}
Ronald~J. Williams. 1992.
\newblock Simple statistical gradient-following algorithms for connectionist
  reinforcement learning.
\newblock \emph{Machine Learning}, 8(3-4):229--256.

\bibitem[{Wu et~al.(2019)Wu, Winston, Kaushik, and Lipton}]{wu2019domain}
Yifan Wu, Ezra Winston, Divyansh Kaushik, and Zachary Lipton. 2019.
\newblock Domain adaptation with asymmetrically-relaxed distribution alignment.
\newblock In \emph{International Conference on Machine Learning}, pages
  6872--6881. PMLR.

\bibitem[{Wu et~al.(2018)Wu, Singh, Davis, and Subrahmanian}]{wu2017deception}
Zhe Wu, Bharat Singh, Larry~S. Davis, and V.S. Subrahmanian. 2018.
\newblock Deception detection in videos.
\newblock In \emph{Thirty-Second AAAI Conference on Artificial Intelligence}.

\bibitem[{Zhao et~al.(2019)Zhao, Des~Combes, Zhang, and
  Gordon}]{zhao2019learning}
Han Zhao, Remi~Tachet Des~Combes, Kun Zhang, and Geoffrey Gordon. 2019.
\newblock On learning invariant representations for domain adaptation.
\newblock In \emph{International Conference on Machine Learning}, pages
  7523--7532. PMLR.

\bibitem[{Zhao et~al.(2018)Zhao, Zhang, Wu, Moura, Costeira, and
  Gordon}]{zhao2018adversarial}
Han Zhao, Shanghang Zhang, Guanhang Wu, Jos{\'e}~M.F. Moura, Joao~P. Costeira,
  and Geoffrey~J. Gordon. 2018.
\newblock Adversarial multiple source domain adaptation.
\newblock In \emph{Advances in Neural Information Processing Systems}, pages
  8559--8570.

\bibitem[{Zhou et~al.(2004)Zhou, Burgoon, Nunamaker, and
  Twitchell}]{zhou2004automating}
Lina Zhou, Judee~K. Burgoon, Jay~F. Nunamaker, and Doug Twitchell. 2004.
\newblock Automating linguistics-based cues for detecting deception in
  text-based asynchronous computer-mediated communications.
\newblock \emph{Group Decision and Negotiation}, 13(1):81--106.

\end{thebibliography}
\bibliographystyle{acl_natbib}

\end{document}